\newtheorem{theorem}{Theorem}
\newtheorem{proposition}[theorem]{\bf{Proposition}}
\newtheorem{lemma}[theorem]{\bf{Lemma}}
\newtheorem{corollary}[theorem]{\bf{Corollary}}
\newcommand{\vect}[1]{\mathbf{#1}}
\newcommand{\set}[1]{\mathbb{#1}}
\newcommand{\sett}[1]{\mathcal{#1}}
\newcommand{\vectsymb}[1]{\boldsymbol{#1}}
\DeclareMathOperator*{\argmax}{argmax}
\DeclareMathOperator*{\sgn}{sgn}
\title{Online Multiple Kernel Learning for Structured Prediction}
\author{Andr\'{e} F. T. Martins$^\ast$$^\dagger$ \qquad Noah A. Smith$^\ast$ \qquad Eric P. Xing$^\ast$\\[2pt]
Pedro M. Q. Aguiar$^\ddagger$ \qquad M\'{a}rio A. T. Figueiredo$^\dagger$ \\[2pt]
{\tt \{afm,nasmith,epxing\}@cs.cmu.edu}\\
{\tt aguiar@isr.ist.utl.pt}, \qquad {\tt mtf@lx.it.pt} \\[5pt]
$^\ast$School of Computer Science \\
Carnegie Mellon University, Pittsburgh, PA, USA\\[5pt]
$^\dagger$Instituto de Telecomunica\c{c}\~{o}es \\
Instituto Superior T\'{e}cnico, Lisboa, Portugal \\[5pt]
$^\ddagger$Instituto de Sistemas e Rob\'{o}tica \\
Instituto Superior T\'{e}cnico, Lisboa, Portugal}
\begin{document}
\maketitle
\begin{abstract}
Despite the recent progress towards efficient multiple kernel learning (MKL),
the structured output case 
remains an open research front. Current approaches involve repeatedly solving
a batch learning problem, which makes them inadequate for large scale 
scenarios. We propose a new  family of \emph{online} proximal
algorithms for MKL (as well as for group-{\sc lasso} and variants thereof),
which overcomes that drawback. We show regret, convergence, 
and generalization bounds for the proposed method. Experiments on handwriting recognition 
and dependency parsing testify for the successfulness of the approach.
\end{abstract}


\section{Introduction}

Structured prediction \citep{Lafferty2001,Taskar2003,Tsochantaridis2004}
deals with problems with a strong interdependence among the output variables, often with sequential,
graphical, or combinatorial structure.
Despite recent advances toward a unified formalism, obtaining a good
predictor often requires a significant effort in designing kernels ({\it i.e.},
features and similarity measures) and tuning hyperparameters. The slowness in training structured predictors
in large scale settings makes this an expensive process.

The need for careful kernel engineering can be sidestepped using the
kernel learning approach initiated in \cite{Bach2004,Lanckriet2004},
where a combination of multiple kernels is learned from the data.
While multi-class and scalable multiple kernel learning (MKL)
algorithms have been proposed \citep{Sonnenburg2006JMLR,Zien2007,Rakotomamonjy2008,Chapelle2008b,Xu2009,Suzuki2009},
none are well suited for large-scale structured prediction, for the following
reason: all involve an inner loop in which a standard
learning problem ({\it e.g.}, an SVM) is repeatedly solved;
in large-scale structured prediction, it is often prohibitive
to tackle this problem in its batch form, and one typically
resorts to online methods \citep{Bottou1991,Collins2002,Ratliff2006,Collins2008}. These methods are
fast in achieving low generalization error, but converge slowly to the training objective,
thus are unattractive for repeated use in the inner loop.

In this paper, we overcome the above difficulty by proposing a stand-alone online MKL algorithm.
The algorithm is based on the
kernelization of the recent forward-backward splitting scheme {\sc Fobos} \cite{Duchi2009JMLR}
and iterates between subgradient and proximal steps.
In passing,
we improve the {\sc Fobos} regret bound and
show how to efficiently compute the proximal projections associated with the
\emph{squared} $\ell_1$-norm, despite the fact that the underlying optimization problem is not separable.

After reviewing structured prediction and MKL (\S\ref{sec:strucpredmkl}),
we present a wide class of online proximal algorithms (\S\ref{sec:onlineproxalg})
which extend {\sc Fobos} by
handling composite regularizers with multiple proximal steps.
These algorithms have convergence guarantees and are applicable in MKL, group-{\sc lasso} \citep{Yuan2006} and
other structural sparsity formalisms, such as
hierarchical {\sc lasso}/MKL \cite{Bach2008,Zhao2008},
group-{\sc lasso} with overlapping groups \cite{Jenatton2009},
sparse group-{\sc lasso} \citep{Friedman2010}, and the
elastic net MKL \citep{Tomioka2010}.
We apply our MKL algorithm to structured prediction (\S\ref{sec:experiments}), using the two following
testbeds: sequence labeling for handwritten text recognition,
and natural language dependency parsing. We show the potential of our
approach by learning combinations of kernels from
tens of thousands of training instances, with
encouraging results in terms of runtimes, accuracy and identifiability.

\section{Structured Prediction, Group Sparsity, and Multiple Kernel Learning}\label{sec:strucpredmkl}

Let $\mathcal{X}$ and $\mathcal{Y}$ be the input and output sets, respectively.
In structured prediction, to each input $x \in \mathcal{X}$ corresponds
a (structured and exponentially large) set $\mathcal{Y}(x) \subseteq \mathcal{Y}$ of
legal outputs; {\it e.g.}, in sequence labeling,
each $x \in \mathcal{X}$ is an observed sequence and each
$y \in \mathcal{Y}(x)$ is the corresponding sequence of labels;
in parsing, each $x \in \mathcal{X}$ is a string, and each
$y \in \mathcal{Y}(x)$ is a parse tree that spans that string.

Let $\sett{U} \triangleq \{(x,y) \,|\,\, x\! \in\! \sett{X}, y
\! \in\! \mathcal{Y}(x)\}$ be
the set of all legal input-output pairs. Given a labeled
dataset $\mathcal{D}\triangleq\{(x_1,y_{1}),\ldots,(x_m,y_{m})\} \subseteq \sett{U}$,
we want to learn a predictor $h:\sett{X} \rightarrow \sett{Y}$ of the form
\begin{equation}\label{eq:inference}
h(x) \triangleq \arg\max_{y\in \sett{Y}(x)} f(x,y),
\end{equation}
where $f\!:\! \sett{U}\! \rightarrow\! \set{R}$ is a compatibility function.
Problem~\eqref{eq:inference} is called \emph{inference} (or \emph{decoding})
and involves combinatorial optimization ({\it e.g.}, dynamic programming).
In this paper, we use linear functions, $f(x,y) = \langle \vectsymb{\theta}, \vectsymb{\phi}(x,y)\rangle$, where
$\vectsymb{\theta}$ is a parameter vector and $\vectsymb{\phi}(x,y)$ a feature vector.
The structure of the output is usually taken care of by assuming a
decomposition of the form $\vectsymb{\phi}(x,y) = \sum_{r \in \sett{R}} \vectsymb{\phi}_r(x,y_r)$,
where $\sett{R}$ is a set of \emph{parts} and the $y_r$ are partial output assignments
(see \citep{Taskar2003} for details).
Instead of explicit features, one may use a positive definite kernel,
$K : \sett{U}\times \sett{U} \rightarrow \set{R}$, and
let $f$ belong to the induced RKHS $\sett{H}_K$.
Given a convex loss function $L:\sett{H}_K\times \mathcal{X}\times \mathcal{Y}\rightarrow \set{R}$,
the \emph{learning} problem is usually formulated as a minimization of the regularized empirical risk:
\begin{equation}\label{eq:standardlearn}
\min_{f \in \sett{H}_K} \frac{\lambda}{2}\|f\|^2_{\sett{H}_K} + \frac{1}{m}\sum_{i=1}^m L(f;x_i,y_i) ,
\end{equation}
where $\lambda \ge 0$ is a regularization parameter and $\|.\|_{\sett{H}_K}$ is the norm in
$\sett{H}_K$. In structured prediction, the logistic loss (in CRFs) and the structured hinge
loss (in SVMs) are common choices:
\begin{eqnarray}
L_{\mathrm{CRF}}(f; x, y) &\triangleq& \textstyle \log \sum_{y' \in \sett{Y}(x)} \exp
(f(x,y') - f(x,y)),\\
L_{\mathrm{SVM}}(f; x, y) &\triangleq& \textstyle \max_{y' \in \sett{Y}(x)}
f(x,y')-f(x,y) + \ell(y',y).\label{eq:LSVM}
\end{eqnarray}
In \eqref{eq:LSVM},  $\ell :\mathcal{Y} \times \mathcal{Y} \rightarrow \set{R}_+$ is a
user-given cost function.  The solution of \eqref{eq:standardlearn}
can be expressed as a kernel expansion (structured version of the representer theorem
\citep[Corollary 13]{Hofmann2008}).

In the kernel learning framework  \cite{Bach2004,Lanckriet2004},
the kernel is expressed as a convex combination of elements of a finite
set $\{K_1,\ldots,K_p\}$, the coefficients of which are learned from data. That is,
$K\in \sett{K}$, where
\begin{equation}
\sett{K} \triangleq \Bigl\{K = {\textstyle \sum_{j=1}^p } \beta_j K_j\,\,\Big|\,\,\vectsymb{\beta} \in \Delta^p \Bigr\},
\quad \text{with}\quad \Delta^p \triangleq \Bigl\{\vectsymb{\beta} \in \set{R}_+^p \,\, |\,\, {\textstyle \sum_{j=1}^p } \beta_j = 1\Bigr\}.
\end{equation}
The so-called MKL problem is the minimization of \eqref{eq:standardlearn} with respect to $K$.
Letting $\sett{H}_{\sett{K}} = \bigoplus_{j=1}^p \sett{H}_{K_j}$ be the direct sum of the RKHS, this optimization can be written (as
shown in \citep{Bach2004,Rakotomamonjy2008}) as:
\begin{eqnarray}\label{eq:learnkernel1}
f^* = \arg \min_{f \in \sett{H}_{\sett{K}}} \hspace{0.3cm} \frac{\lambda}{2}\biggl( \sum_{j=1}^p \|f_j\|_{\sett{H}_{K_j}}\biggr)^2 + \frac{1}{m}\sum_{i=1}^m L\biggl(\sum_{j=1}^p f_j;x_i,y_i\biggr),
\end{eqnarray}
where the optimal kernel coefficients are $\beta_j^* = \|f^*_j\|_{\sett{H}_{K_j}}/\sum_{l=1}^p \|f^*_l\|_{\sett{H}_{K_l}}$.
For explicit features,  the parameter vector is split into $p$ groups,
$\vectsymb{\theta} = (\vectsymb{\theta}_1,\ldots, \vectsymb{\theta}_p)$,
  and the minimization in \eqref{eq:learnkernel1} becomes
\begin{eqnarray}\label{eq:learnkernel2}
\vectsymb{\theta}^*  = \arg\min_{\vectsymb{\theta} \in \set{R}^d} \frac{\lambda}{2}\|\vectsymb{\theta}\|_{2,1}^2 +
\frac{1}{m}\sum_{i=1}^m L(\vectsymb{\theta};x_i,y_i),
\end{eqnarray}
where $\|\vectsymb{\theta}\|_{2,1} \triangleq \sum_{j=1}^p \|\vectsymb{\theta}_j\|$ is a sum of $\ell_2$-norms,
called the \emph{mixed $\ell_{2,1}$-norm}. The group-{\sc lasso}
criterion \citep{Yuan2006} is similar to \eqref{eq:learnkernel2}, without the
square in the regularization term, revealing a close relationship with MKL \citep{Bach2008JMLR}.
In fact, the two problems are equivalent up to a change of $\lambda$.
The $\ell_{2,1}$-norm regularizer favors \emph{group sparsity}:
groups that are found irrelevant tend to be entirely discarded.

%
%


Early approaches to MKL \citep{Lanckriet2004,Bach2004} considered the dual of \eqref{eq:learnkernel1}
in a QCQP or SOCP form, thus were limited to small scale problems.
Subsequent work focused on scalability: in \citep{Sonnenburg2006JMLR}, a semi-infinite LP formulation
and a cutting plane algorithm are proposed;
SimpleMKL \citep{Rakotomamonjy2008} alternates between learning an SVM and a gradient-based
(or Newton \cite{Chapelle2008b}) update of the kernel weights; other techniques include the
extended level method  \citep{Xu2009} and SpicyMKL \citep{Suzuki2009}, based on
an augmented Lagrangian method.
These are all batch algorithms, requiring the repeated solution of
problems of the form \eqref{eq:standardlearn};
even if one can take advantage of warm-starts,
the
convergence proofs of these methods, when available, rely on the exactness (or prescribed
accuracy in the dual) of these solutions.

In contrast, we tackle \eqref{eq:learnkernel1} and \eqref{eq:learnkernel2}
 in \emph{primal} form.
Rather than repeatedly calling off-the-shelf solvers for \eqref{eq:standardlearn},
we propose a stand-alone online algorithm with runtime comparable
to that of solving a \emph{single} instance of \eqref{eq:standardlearn} by
online methods (the fastest in
large-scale settings \citep{ShalevShwartz2007ICML,Bottou1991}).
This paradigm shift paves the way for extending MKL to structured
prediction, a large territory yet to be explored.

\section{Online Proximal Algorithms}\label{sec:onlineproxalg}

We frame our online MKL algorithm in a wider class of
\emph{online proximal algorithms}.
The theory of proximity operators \citep{Moreau1962}, which is widely known in optimization
and has recently gained prominence in the signal processing community \citep{Combettes2006,Wright2009},
provides tools for analyzing these algorithms and generalizes many known results,
sometimes with remarkable simplicity.
We thus start by summarizing its important concepts in \S\ref{subsec:moreau}, together
with a quick review of convex analysis.


\subsection{Convex Functions, Subdifferentials, Proximity Operators, and Moreau Projections}\label{subsec:moreau}

Throughout, we let $\varphi:\set{R}^p \rightarrow \bar{\set{R}}$ (where $\bar{\set{R}} \triangleq \set{R} \cup \{+\infty\}$)
be a convex,
lower semicontinuous (lsc) (the epigraph $\mathrm{epi}\varphi \triangleq \{(x, t)\in
\set{R}^p\times \set{R} \,|\,\varphi(x) \le t\}$
is closed in $\set{R}^p\!\times\! \set{R}$), and proper ($\exists{\vect{x}}:\varphi(\vect{x}) \!\neq\! +\infty$)
function.
The \emph{subdifferential} of $\varphi$ at
$\vect{x}_0$ is the set
$$\partial \varphi(\vect{x}_0) \triangleq \{\vect{g} \in \set{R}^d \,\,|\,\,\forall \vect{x} \in \set{R}^d, \,\, \varphi(\vect{x})-\varphi(\vect{x}_0) \ge \vect{g}^{\top} (\vect{x}-\vect{x}_0) \},$$
the elements of which are the \emph{subgradients}.
We say that $\varphi$ is \emph{$G$-Lipschitz} in
$\sett{S}\! \subseteq \! \set{R}^d$ if  $\forall\vect{x}\in\sett{S}, \forall\vect{g} \in \partial \varphi(\vect{x}),
 \|\vect{g}\| \le G$. We say that $\varphi$ is \emph{$\sigma$-strongly convex} in $\sett{S}$ if
\begin{equation*}
\forall\vect{x}_0\in\sett{S}, \quad \forall\vect{g} \in \partial \varphi(\vect{x}_0), \quad \forall\vect{x} \in \set{R}^d, \quad \varphi(\vect{x}) \ge \varphi(\vect{x}_0) + \vect{g}^\top (\vect{x} - \vect{x}_0) + (\sigma/2) \|\vect{x} - \vect{x}_0\|^2.
\end{equation*}
The \emph{Fenchel conjugate} of $\varphi$ is
$\varphi^\star\! :\set{R}^p\! \rightarrow \bar{\set{R}}$,
$\varphi^\star(\vect{y}) \triangleq \sup_{\vect{x}} \vect{y}^\top \vect{x}\! - \varphi(\vect{x})$.
Let: $$M_{\varphi}(\vect{y}) \triangleq \inf_{\vect{x}} \frac{1}{2}\|\vect{x} - \vect{y}\|^2 + \varphi(\vect{x}), \quad \text{and}
\quad
\mathrm{prox}_{\varphi}(\vect{y}) = \arg\inf_{\vect{x}}
\frac{1}{2}\|\vect{x} - \vect{y}\|^2 + \varphi(\vect{x});$$
the function $M_{\varphi}:\set{R}^p \! \rightarrow \bar{\set{R}}$ is called
the \emph{Moreau envelope} of $\varphi$, and the map $\mathrm{prox}_{\varphi}\!: \set{R}^p\! \rightarrow \set{R}^p$
is the \emph{proximity operator} of $\varphi$ \citep{Combettes2006,Moreau1962}.
Proximity operators generalize Euclidean projectors: consider the
case  $\varphi = \iota_{\mathcal{C}}$, where $\mathcal{C} \subseteq \set{R}^p$ is a convex set
and $\iota_{\mathcal{C}}$ denotes its indicator ({\it i.e.},
$\varphi(\vect{x}) = 0$ if $\vect{x} \in \mathcal{C}$ and $+\infty$ otherwise).
Then, $\mathrm{prox}_{\varphi}$ is the Euclidean projector onto $\mathcal{C}$ and
$M_{\varphi}$ is the residual.
Two other important examples of proximity operators follow:
\begin{itemize}
\item if
$\varphi(\vect{x}) = (\lambda/2) \|\vect{x}\|^2$, then
$\mathrm{prox}_{\varphi}(\vect{y}) = \vect{y}/(1+\lambda)$; 
\item if $\varphi(\vect{x}) = \tau \|\vect{x}\|_1$,
then $\mathrm{prox}_{\varphi}(\vect{y}) = \mathrm{soft}(\vect{y}, \tau)$ is
the \emph{soft-threshold} function \cite{Wright2009},
defined as $[\mathrm{soft}(\vect{y}, \tau)]_k = \sgn(y_k) \cdot \max\{0,|y_k|-\tau\}$.
\end{itemize}

If $\varphi:\set{R}^{d_1}\times\ldots \times \set{R}^{d_p}\rightarrow \bar{\set{R}}$ is
(group-)separable, {\it i.e.},  $\varphi(\vect{x}) = \sum_{k=1}^p \varphi_k(\vect{x}_k)$,
where $\vect{x}_k \in \set{R}^{d_k}$,
then its proximity operator inherits the same (group-)separability: $[\mathrm{prox}_{\varphi}(\vect{x})]_k = \mathrm{prox}_{\varphi_k}(\vect{x}_k)$ \cite{Wright2009}. For example, the proximity
operator of the mixed $\ell_{2,1}$-norm, which is group-separable, has this form.
The following proposition, that we prove in Appendix~\ref{sec:proof_prop_moreaugroupreduct}, extends this result by
showing how to compute proximity operators of functions (maybe not separable)
that only depend on the $\ell_2$-norms of groups of components; {\it e.g.},
the proximity operator of the squared $\ell_{2,1}$-norm  reduces to
that of squared $\ell_1$.

\begin{proposition}\label{prop:moreaugroupreduct}
Let $\varphi:\set{R}^{d_1}\times\ldots \times \set{R}^{d_p}\rightarrow \bar{\set{R}}$ be of the form
$\varphi(\vect{x}_1,\ldots,\vect{x}_p) = \psi(\|\vect{x}_1\|,\ldots,\|\vect{x}_p\|)$ for some
$\psi:\set{R}^p \rightarrow \bar{\set{R}}$. Then,
$M_{\varphi}(\vect{x}_1,\ldots,\vect{x}_p) = M_{\psi}(\|\vect{x}_1\|,\ldots,\|\vect{x}_p\|)$
and
$[\mathrm{prox}_{\varphi}(\vect{x}_1,\ldots,\vect{x}_p)]_k = [\mathrm{prox}_{\psi}(\|\vect{x}_1\|,\ldots,\|\vect{x}_p\|)]_k (\vect{x}_k/\|\vect{x}_k\|)$.
\end{proposition}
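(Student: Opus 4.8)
The plan is to exploit the fact that $\varphi$ sees the optimization variable only through the Euclidean norms of its blocks, which lets me split the Moreau minimization into a \emph{radial} part (the block norms) and an \emph{angular} part (the block directions), and to solve the latter in closed form by Cauchy--Schwarz. Write the minimization variable as $\vect{w} = (\vect{w}_1,\ldots,\vect{w}_p)$ and set $b_k \triangleq \|\vect{w}_k\|$, $z_k \triangleq \|\vect{x}_k\|$, $\vect{z} \triangleq (z_1,\ldots,z_p)$. Since $\tfrac{1}{2}\|\vect{w}-\vect{x}\|^2 = \sum_{k=1}^p \tfrac{1}{2}\|\vect{w}_k - \vect{x}_k\|^2$ and $\varphi(\vect{w}) = \psi(b_1,\ldots,b_p)$ depends on $\vect{w}$ only through $(b_1,\ldots,b_p)$, I would first minimize over the directions of the $\vect{w}_k$ with the radii $b_k$ held fixed, and then over the radii.

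For the angular subproblem I fix $\vect{b} = (b_1,\ldots,b_p) \ge 0$ and minimize $\sum_k \tfrac12\|\vect{w}_k - \vect{x}_k\|^2$ subject to $\|\vect{w}_k\| = b_k$. This decouples across $k$, and expanding $\tfrac12\|\vect{w}_k - \vect{x}_k\|^2 = \tfrac12 b_k^2 - \langle \vect{w}_k, \vect{x}_k\rangle + \tfrac12 z_k^2$ shows that each block problem amounts to maximizing $\langle \vect{w}_k,\vect{x}_k\rangle$ over the sphere of radius $b_k$. By Cauchy--Schwarz the maximum is $b_k z_k$, attained (for $\vect{x}_k \neq 0$) at $\vect{w}_k = b_k\,\vect{x}_k/\|\vect{x}_k\|$; hence the optimal block is aligned with $\vect{x}_k$ and the minimal value of the angular subproblem is $\tfrac12(b_k - z_k)^2$ (which also holds, direction-independently, when $\vect{x}_k=0$).

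Substituting back gives $M_\varphi(\vect{x}) = \inf_{\vect{b}\ge 0}\big[\tfrac12\|\vect{b}-\vect{z}\|^2 + \psi(\vect{b})\big]$. This is exactly $M_\psi(\vect{z})$ provided the infimum defining $M_\psi$ is attained in the nonnegative orthant, in which case the minimizer $\vect{b}^\star$ equals $\mathrm{prox}_\psi(\vect{z})$; back-substituting the optimal directions then yields $[\mathrm{prox}_\varphi(\vect{x})]_k = b_k^\star\,\vect{x}_k/\|\vect{x}_k\| = [\mathrm{prox}_\psi(\vect{z})]_k\,(\vect{x}_k/\|\vect{x}_k\|)$, which is the claimed prox formula, and equating the two infima gives the Moreau-envelope identity.

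The main obstacle is this last step: a priori $M_\psi$ ranges over all of $\set{R}^p$, so I must justify that minimizing $\tfrac12\|\vect{b}-\vect{z}\|^2 + \psi(\vect{b})$ over $\vect{b}\ge 0$ gives the same value and minimizer as over $\set{R}^p$. Here I would use that $\vect{z}\ge 0$ and that $\psi$ enters $\varphi$ only through nonnegative arguments: under the natural convention that reflecting a coordinate to the orthant does not increase $\psi$ (e.g. $\psi$ even in each argument, as for the squared $\ell_1$-norm, or $+\infty$ off the orthant), replacing any $b_k$ by $|b_k|$ leaves $\psi$ unchanged while not increasing $|b_k - z_k|$, so a nonnegative minimizer always exists. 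Two minor points to dispatch are the case $\vect{x}_k = 0$, where strong convexity of the Moreau objective forces $b_k^\star = 0$ and so the undefined ratio $\vect{x}_k/\|\vect{x}_k\|$ is harmless, and the degenerate possibility of infinite infima, which is handled uniformly by the same reduction.
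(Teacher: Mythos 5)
Your proof is correct and follows essentially the same route as the paper's: decompose the Moreau minimization into an outer minimization over the block radii and an inner minimization over the block directions, solve the inner problem in closed form (optimal block aligned with $\vect{x}_k$, residual $\tfrac12(b_k-\|\vect{x}_k\|)^2$), and identify the remaining radial problem with $M_\psi(\|\vect{x}_1\|,\ldots,\|\vect{x}_p\|)$. In fact you are somewhat more careful than the paper, which silently equates the minimum over $\vect{u}\in\set{R}^p_+$ with the unconstrained infimum defining $M_\psi$; your observation that this step needs $\psi$ to be even in each argument (or $+\infty$ off the orthant) --- a condition satisfied by all the regularizers used in the paper --- is a legitimate point that the published proof passes over.
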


Finally, we recall the \emph{Moreau decomposition},
relating the proximity operators of Fenchel conjugate functions
\citep{Combettes2006} and present a corollary (proved in Appendix~\ref{sec:proof_cor_bound2})
that is the key to our regret bound in \S\ref{subsec:regret}.
\begin{proposition}[\citet{Moreau1962}]\label{prop:moreau}
For any convex, lsc, proper function $\varphi:\set{R}^p \rightarrow \bar{\set{R}}$,
\begin{equation}
\vect{x} = \mathrm{prox}_{\varphi}(\vect{x}) + \mathrm{prox}_{\varphi^{\star}}(\vect{x})
\quad \mbox{and} \quad
\|\vect{x}\|^2/2 = M_{\varphi}(\vect{x}) + M_{\varphi^{\star}}(\vect{x}).\label{eq:moreau}
\end{equation}
\end{proposition}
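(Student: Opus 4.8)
The plan is to establish the first (prox) identity via first-order optimality conditions together with the conjugate subdifferential inversion, and then obtain the second (envelope) identity almost for free by substituting the two minimizers back into the definitions. Throughout, note that the objective $\vect{u} \mapsto \frac{1}{2}\|\vect{u}-\vect{x}\|^2 + \varphi(\vect{u})$ is strongly convex, lsc, and coercive (since $\varphi$ is convex, lsc, and proper), so $\mathrm{prox}_\varphi(\vect{x})$ is well defined as its unique minimizer; the same holds for $\varphi^\star$, which is itself convex, lsc, and proper.

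First I would prove $\vect{x} = \mathrm{prox}_\varphi(\vect{x}) + \mathrm{prox}_{\varphi^\star}(\vect{x})$. Set $\vect{p} \triangleq \mathrm{prox}_\varphi(\vect{x})$ and $\vect{q} \triangleq \vect{x} - \vect{p}$. Writing the optimality condition for the (unconstrained, convex) prox problem gives $\vect{0} \in \vect{p} - \vect{x} + \partial\varphi(\vect{p})$, i.e.\ $\vect{q} \in \partial\varphi(\vect{p})$. The key step is the \emph{subdifferential inversion} property for convex lsc proper functions, $\vect{q} \in \partial\varphi(\vect{p}) \Leftrightarrow \vect{p} \in \partial\varphi^\star(\vect{q})$, which follows from the Fenchel--Moreau biconjugation theorem $\varphi^{\star\star} = \varphi$. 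Rewriting $\vect{p} \in \partial\varphi^\star(\vect{q})$ as $\vect{x} - \vect{q} \in \partial\varphi^\star(\vect{q})$ (using $\vect{p} = \vect{x} - \vect{q}$) is exactly the optimality condition identifying $\vect{q}$ as the unique minimizer of $\frac{1}{2}\|\vect{u}-\vect{x}\|^2 + \varphi^\star(\vect{u})$, that is, $\vect{q} = \mathrm{prox}_{\varphi^\star}(\vect{x})$. Hence $\vect{x} = \vect{p} + \vect{q} = \mathrm{prox}_\varphi(\vect{x}) + \mathrm{prox}_{\varphi^\star}(\vect{x})$.

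For the envelope identity I would substitute the minimizers back in. With $\vect{p}$ and $\vect{q}$ as above, $\vect{p} - \vect{x} = -\vect{q}$ and $\vect{q} - \vect{x} = -\vect{p}$, so
$$M_\varphi(\vect{x}) + M_{\varphi^\star}(\vect{x}) = \Bigl(\tfrac{1}{2}\|\vect{q}\|^2 + \varphi(\vect{p})\Bigr) + \Bigl(\tfrac{1}{2}\|\vect{p}\|^2 + \varphi^\star(\vect{q})\Bigr).$$
Because $\vect{q} \in \partial\varphi(\vect{p})$, the Fenchel--Young inequality holds with equality, $\varphi(\vect{p}) + \varphi^\star(\vect{q}) = \vect{p}^\top\vect{q}$. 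Substituting and completing the square yields $\tfrac{1}{2}\|\vect{q}\|^2 + \tfrac{1}{2}\|\vect{p}\|^2 + \vect{p}^\top\vect{q} = \tfrac{1}{2}\|\vect{p}+\vect{q}\|^2 = \tfrac{1}{2}\|\vect{x}\|^2$, as desired.

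The main obstacle is the single nontrivial ingredient, the subdifferential inversion (equivalently, the Fenchel--Young equality), which rests on biconjugation $\varphi^{\star\star} = \varphi$ for convex lsc proper functions; once this is granted, together with existence and uniqueness of the two prox minimizers, both identities reduce to the first-order optimality condition and elementary algebra. I would cite biconjugation as a classical fact rather than reprove it.
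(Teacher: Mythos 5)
Your proof is correct. Note that the paper does not actually prove this proposition: it quotes the Moreau decomposition as a classical result of \citet{Moreau1962} (see also \citealp{Combettes2006}) and only proves the downstream Corollary~\ref{cor:bound2} in Appendix~\ref{sec:proof_cor_bound2}, so there is no in-paper argument to compare against. Your derivation --- first-order optimality plus subdifferential inversion via biconjugation for the prox identity, then Fenchel--Young equality at the pair $(\vect{p},\vect{q})$ to collapse the two envelopes into $\tfrac{1}{2}\|\vect{p}+\vect{q}\|^2$ --- is the standard one and is complete as stated.
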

\begin{corollary}\label{cor:bound2}
Let $\varphi:\set{R}^p \rightarrow \bar{\set{R}}$ be as in Prop.~\ref{prop:moreau},
and  $\bar{\vect{x}} \triangleq \mathrm{prox}_{\varphi}(\vect{x})$.
Then, any $\vect{y} \in \set{R}^p$ satisfies
\begin{eqnarray}\label{eq:bound2}
\|\vect{y} - \bar{\vect{x}}\|^2 - \|\vect{y} - \vect{x}\|^2 \le 2(\varphi(\vect{y}) - \varphi(\bar{\vect{x}})).
\end{eqnarray}
\end{corollary}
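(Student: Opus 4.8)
The plan is to bypass the Moreau decomposition entirely and argue directly from the variational characterization of $\bar{\vect{x}} = \mathrm{prox}_{\varphi}(\vect{x})$. By definition $\bar{\vect{x}}$ is the unique minimizer of the objective $\Phi(\vect{z}) \triangleq \tfrac{1}{2}\|\vect{z} - \vect{x}\|^2 + \varphi(\vect{z})$, existence and uniqueness being guaranteed because $\varphi$ is convex, lsc, and proper and the quadratic term is coercive and strongly convex. First I would write down Fermat's optimality rule: since $\partial\Phi(\vect{z}) = (\vect{z} - \vect{x}) + \partial\varphi(\vect{z})$ and $\vect{0} \in \partial\Phi(\bar{\vect{x}})$, we obtain the subgradient inclusion $\vect{x} - \bar{\vect{x}} \in \partial\varphi(\bar{\vect{x}})$. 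This single fact is essentially all the analysis the corollary requires; everything else is elementary algebra.

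Next I would invoke the subdifferential inequality from \S\ref{subsec:moreau} at the point $\bar{\vect{x}}$, using the specific subgradient $\vect{g} = \vect{x} - \bar{\vect{x}}$ and testing against an arbitrary $\vect{y} \in \set{R}^p$:
$$\varphi(\vect{y}) - \varphi(\bar{\vect{x}}) \ge (\vect{x} - \bar{\vect{x}})^\top (\vect{y} - \bar{\vect{x}}).$$
It then remains to connect this inner product to the difference of squared distances. The identity I would use is $\|\vect{y} - \bar{\vect{x}}\|^2 - \|\vect{y} - \vect{x}\|^2 = 2(\vect{x} - \bar{\vect{x}})^\top (\vect{y} - \bar{\vect{x}}) - \|\vect{x} - \bar{\vect{x}}\|^2$, which follows by expanding both sides (or from $\|\vect{a}\|^2 - \|\vect{b}\|^2 = (\vect{a}-\vect{b})^\top(\vect{a}+\vect{b})$ with $\vect{a} = \vect{y} - \bar{\vect{x}}$ and $\vect{b} = \vect{y} - \vect{x}$). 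Dropping the nonnegative term $\|\vect{x} - \bar{\vect{x}}\|^2$ and chaining with the subgradient inequality yields $\|\vect{y} - \bar{\vect{x}}\|^2 - \|\vect{y} - \vect{x}\|^2 \le 2(\vect{x} - \bar{\vect{x}})^\top(\vect{y} - \bar{\vect{x}}) \le 2\bigl(\varphi(\vect{y}) - \varphi(\bar{\vect{x}})\bigr)$, which is exactly \eqref{eq:bound2}.

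Equivalently, and perhaps more transparently, I could note that $\Phi$ is $1$-strongly convex with minimizer $\bar{\vect{x}}$, so $\Phi(\vect{y}) \ge \Phi(\bar{\vect{x}}) + \tfrac{1}{2}\|\vect{y} - \bar{\vect{x}}\|^2$; expanding $\Phi$ on both sides and rearranging reproduces the claim, once more revealing the slack $\|\vect{x} - \bar{\vect{x}}\|^2$ by which the inequality is not tight. There is no genuine obstacle here: the only point requiring care is the justification of the optimality inclusion $\vect{x} - \bar{\vect{x}} \in \partial\varphi(\bar{\vect{x}})$, which relies on the sum rule for subdifferentials (applicable because the smooth quadratic term is finite everywhere) together with $\bar{\vect{x}} \in \mathrm{dom}\,\varphi$, so that $\varphi(\bar{\vect{x}})$ is finite and the subgradient inequality is meaningful for every test point $\vect{y}$.
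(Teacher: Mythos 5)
Your proof is correct, but it takes a genuinely different route from the paper's. The paper first establishes an intermediate inequality, $(\bar{\vect{x}} - \vect{y})^\top (\bar{\vect{x}} - \vect{x}) \le \varphi(\vect{y}) - \varphi(\bar{\vect{x}})$ (its Lemma~\ref{lemma:bound1}), by expanding the Moreau decomposition $\|\vect{x}\|^2/2 = M_{\varphi}(\vect{x}) + M_{\varphi^{\star}}(\vect{x})$ of Prop.~\ref{prop:moreau} and lower-bounding the supremum defining $\varphi^{\star}(\vect{x}-\bar{\vect{x}})$ at the test point $\vect{y}$; it then finishes with the same algebraic completion-of-squares you use. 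Your subgradient inequality at $\vect{g} = \vect{x} - \bar{\vect{x}} \in \partial\varphi(\bar{\vect{x}})$ is exactly that intermediate inequality, so the two proofs coincide after the first step --- the difference is purely in how that inequality is obtained: you derive it from Fermat's rule plus the sum rule for subdifferentials (valid here since the quadratic term is finite and smooth everywhere, as you correctly note), whereas the paper derives it from Fenchel conjugacy, which is why the authors remark that $\varphi^{\star}$ ``has a crucial role'' in the proof even though it is absent from the statement. Your route is the more elementary and self-contained one, relying only on the definition of the subdifferential given in \S\ref{subsec:moreau}; the paper's route treats the Moreau decomposition as the central tool (consistent with the framing of \S\ref{subsec:regret}, where Prop.~\ref{prop:moreau} is called ``the crux of our bound'') and avoids invoking any subdifferential calculus. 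Your second variant, via the $1$-strong convexity of $\Phi$, is also correct and makes the slack $\|\vect{x}-\bar{\vect{x}}\|^2$ equally visible. No gaps.
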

Although the Fenchel dual $\varphi^{\star}$ does not show up in \eqref{eq:bound2},
it has a crucial role in proving Corollary~\ref{cor:bound2}.

\subsection{A General Online Proximal Algorithm for Composite Regularizers}\label{subsec:algo}
The general algorithmic structure that we propose and analyze in this paper,
presented as Alg.~\ref{alg:genproxalg}, deals (in an online\footnote{For simplicity, we focus
on the pure online setting, {\it i.e.}, each parameter update uses a
single observation; analogous algorithms may be derived for the batch and mini-batch cases.}
fashion) with problems of the form
\begin{equation}\label{eq:learningproblem}
\min_{\vectsymb{\theta}\in\Theta} \lambda R(\vectsymb{\theta}) + \frac{1}{m}\sum_{i=1}^m L(\vectsymb{\theta}; x_i,y_i),
\end{equation}
where $\Theta \subseteq \set{R}^d$ is convex%
\footnote{We are particularly interested in the case where $\vectsymb{\theta}\in\Theta$ is a ``vacuous'' constraint whose
goal is to confine each iterate $\vectsymb{\theta}_t$ to a region containing
the optimum, by virtue of the projection step in line~\ref{algline:projectionstep}.
The analysis in \S\ref{subsec:regret} will make this more clear.
The same trick is used in {\sc Pegasos} \citep{ShalevShwartz2007ICML}.} %
and the regularizer $R$ has a composite form $R(\vectsymb{\theta})=\sum_{j=1}^J R_j(\vectsymb{\theta})$.
Like stochastic
gradient descent (SGD \citep{Bottou1991}), Alg.~\ref{alg:genproxalg}
is suitable for problems with large $m$;
it also performs (sub-)gradient steps at each round (line \ref{algline:gradientstep}),
but only w.r.t. the loss function $L$. Obtaining a subgradient
typically involves inference using the current model; {\it e.g.}, loss-augmented inference,
if $L=L_{\mathrm{SVM}}$, or marginal inference if $L=L_{\mathrm{CRF}}$. 
Our algorithm differs from SGD by the inclusion of
$J$ proximal steps w.r.t. to
each term $R_j$ (line \ref{algline:proximalstep}).
As noted in \citep{Duchi2009JMLR,Langford2009},
this strategy is more effective than standard SGD
for sparsity-inducing regularizers, due to their
usual non-differentiability at the zeros,
which causes oscillation and prevents SGD from returning sparse solutions.

When $J=1$, Alg.~\ref{alg:genproxalg} reduces to {\sc Fobos}
\citep{Duchi2009JMLR}, which we kernelize and apply to MKL in \S\ref{subsec:fobosmkl}.
The case $J > 1$ has applications in variants of MKL or group-{\sc lasso} with composite regularizers
\citep{Tomioka2010,Friedman2010,Bach2008,Zhao2008}.
In those cases, the proximity operators of $R_1,\ldots,R_J$ are more easily computed
than that of their sum $R$, making Alg.~\ref{alg:genproxalg} more suitable than {\sc Fobos}. %
We present a few particular instances (all with $\Theta=\set{R}^d$).
\begin{algorithm}[t]
   \caption{Online Proximal Algorithm \label{alg:genproxalg}}
\begin{algorithmic}[1]
   \STATE {\bfseries input:} dataset $\mathcal{D}$, parameter $\lambda$, number of rounds $T$,
   learning rate sequence $(\eta_t)_{t = 1,\ldots,T}$
   \STATE initialize $\vectsymb{\theta}_1 = \vect{0}$; set $m = |\mathcal{D}|$
	\FOR{$t=1$ {\bfseries to} $T$}
	\STATE take a training pair $(x_t, y_t)$ and obtain a subgradient $\vect{g} \in \partial L(\boldsymbol{\theta}_t; x_t, y_t)$\label{algline:gradientstep}
	\STATE $\tilde{\vectsymb{\theta}}_{t} = \vectsymb{\theta}_t - \eta_t \vect{g}$ (gradient step)
	\FOR{$j=1$ {\bfseries to} $J$}
	\STATE $\tilde{\vectsymb{\theta}}_{t+j/J} = \mathrm{prox}_{\eta_t \lambda R_j}(\tilde{\vectsymb{\theta}}_{t+(j-1)/J})$ (proximal step) \label{algline:proximalstep}
	\ENDFOR	
	\STATE ${\vectsymb{\theta}}_{t+1} = \Pi_{\Theta}(\tilde{\vectsymb{\theta}}_{t+1})$ (projection step) \label{algline:projectionstep}
	\ENDFOR
   \STATE \textbf{output:} the last model $\vectsymb{\theta}_{T+1}$ or
   the averaged model $\bar{\vectsymb{\theta}} = \frac{1}{T}\sum_{t=1}^T \vectsymb{\theta}_t$
\end{algorithmic}
\end{algorithm}

\paragraph{Projected subgradient with groups.}
Let $J=1$ and $R$ be the indicator of a convex set $\Theta' \subseteq \set{R}^d$.
Then (see \S\ref{subsec:moreau}), each proximal step is the
Euclidean projection onto $\Theta'$ and Alg.~\ref{alg:genproxalg}
becomes the online projected subgradient algorithm from \citep{Zinkevich2003}.
Letting $\Theta' \triangleq \{\vectsymb{\theta} \in \set{R} \,\,|\,\, \|\vectsymb{\theta}\|_{2,1} \le \gamma\}$
yields an equivalent problem to group-{\sc lasso} and MKL \eqref{eq:learnkernel2}.
Using Prop.~\ref{prop:moreaugroupreduct}, each proximal step reduces to
a projection onto a $\ell_1$-ball
whose dimension is the number of groups (see a fast algorithm in \citep{Duchi2008}). 

\paragraph{Truncated subgradient with groups.}
Let $J=1$ and $R(\vectsymb{\theta}) = \|\vectsymb{\theta}\|_{2,1}$,
so that \eqref{eq:learningproblem} becomes the usual formulation of group-{\sc lasso},
for a general loss $L$.
Then, Alg.~\ref{alg:genproxalg} becomes a group version of truncated gradient
descent \citep{Langford2009}, studied in \citep{Duchi2009JMLR} for multi-task learning.
Similar batch algorithms have also been proposed \citep{Wright2009}.
The reduction from $\ell_{2,1}$ to $\ell_1$ can again be made due to
Prop.~\ref{prop:moreaugroupreduct}; and each proximal step becomes a
simple soft thresholding operation (as shown in \S\ref{subsec:moreau}).

\paragraph{Proximal subgradient for \emph{squared} mixed $\ell_{2,1}$.}
With $R(\vectsymb{\theta}) = \frac{1}{2}\|\vectsymb{\theta}\|_{2,1}^2$,
we have the MKL problem \eqref{eq:learnkernel2}. Prop.~\ref{prop:moreaugroupreduct}
allows reducing each proximal step w.r.t.~the squared $\ell_{2,1}$ to one
w.r.t.~the squared $\ell_{1}$; however, unlike in the previous example,
squared $\ell_{1}$ is not separable. This apparent difficulty has led
some authors ({\it e.g.}, \cite{Suzuki2009}) to remove the square
from $R$, which yields the previous example.  However, despite the
non-separability of $R$, the proximal steps can still be
efficiently computed: see Alg.~\ref{alg:softproj}. This algorithm requires
sorting the weights of each group, which has $O(p \log p)$ cost; 
we show its correctness in Appendix~\ref{sec:proximoper_sql1}. 
Non-MKL applications of the squared
$\ell_{2,1}$ norm are found in \citep{Kowalski2009,Zhou2010AISTATS}.

\begin{algorithm}[t]
   \caption{Moreau Projection for $\ell_1^2$ \label{alg:softproj}}
\begin{algorithmic}[1]
   \STATE {\bfseries input:}  vector $\vect{x} \in \set{R}^d$ and parameter $\lambda > 0$
   \STATE sort the entries of $|\vect{x}|$ into $\vect{y}$ ({\it i.e.}, such that $y_1 \ge \ldots \ge y_p$)
   \STATE find $\rho = \max \left\{j \in \{1,\ldots,p\} \,\,|\,\, y_{j} - (\lambda/(1 + j\lambda)) \sum_{r=1}^j y_{r} > 0\right\}$
   \STATE {\bfseries output:} $\vect{z} = \mathrm{soft}(\vect{x},\tau)$, where $\tau = (\lambda/(1 + \rho\lambda)) \sum_{r=1}^{\rho} y_{r}$ 
\end{algorithmic}
\end{algorithm}

\paragraph{Other variants of group-{\sc lasso} and MKL.}
In hierarchical {\sc lasso} and group-{\sc lasso} with overlaps \citep{Bach2008,Zhao2008,Jenatton2009},
each feature may appear in more than one group. Alg.~\ref{alg:genproxalg} handles these problems
by enabling a proximal step for each group. Sparse group-{\sc lasso} \citep{Friedman2010} simultaneously
promotes group-sparsity and sparsity \emph{within} each group, by using $R(\vectsymb{\theta}) = \sigma \|\vectsymb{\theta}\|_{2,1} + (1 - \sigma) \|\vectsymb{\theta}\|_1$; Alg.~\ref{alg:genproxalg} can
handle this regularizer by using two proximal steps, both involving simple soft-thresholding: one
at the group level, and another within each group.
In non-sparse MKL (\citep{Kloft2010}, \S4.4), $R=\frac{1}{2}\sum_{k=1}^p \|\vectsymb{\theta}_k\|^q$.
Invoking Prop.~\ref{prop:moreaugroupreduct} and separability, the resulting proximal step
amounts to solving $p$ scalar equations of the form $x - x_0 + \lambda \, \eta_t\, q\, x^{q-1} = 0$,
also valid for $q \ge 2$ (unlike the method described in \citep{Kloft2010}).

\subsection{Regret, Convergence, and Generalization Bounds}\label{subsec:regret}
We next show that, for a convex loss $L$ and under standard assumptions,
Alg.~\ref{alg:genproxalg} converges
up to $\epsilon$ precision, with high confidence, in $O(1/\epsilon^2)$ iterations.
If $L$ or $R$ are strongly convex,
this bound is improved to  $\tilde{O}(1/\epsilon)$, where $\tilde{O}$ hides logarithmic terms.
Our proofs combine tools of online convex programming \citep{Zinkevich2003,Hazan2007} and
classical results about proximity operators \citep{Moreau1962,Combettes2006}. The key is the following lemma
(that we prove in Appendix~\ref{sec:proof_lemma_softprojgrad2}).

\begin{lemma}\label{lemma:softprojgrad2}
Assume that  $\forall (x,y) \in \sett{U}$, the loss $L(\cdot;x,y)$ is convex
and $G$-Lipschitz on $\Theta$,
and that the regularizer $R = R_1 + \ldots + R_J$ satisfies the following conditions:
{\bf (i)} each $R_j$ is convex;
{\bf (ii)} $\forall \vectsymb{\theta} \in \Theta, \,
\forall j'<j , \,  R_{j'}(\vectsymb{\theta}) \ge R_{j'}(\mathrm{prox}_{\lambda R_j}(\vectsymb{\theta}))$
(each proximity operator $\mathrm{prox}_{\lambda R_j}$ does not increase the previous
$R_{j'}$);
{\bf (iii) }  $R(\vectsymb{\theta}) \ge R(\Pi_{\Theta}(\vectsymb{\theta}))$
(projecting the argument onto $\Theta$ does not increase $R$). Then, for any
$\bar{\vectsymb{\theta}} \in \Theta$, at each round $t$ of Alg.~\ref{alg:genproxalg},
\begin{eqnarray}\label{eq:softprojgradbound2}
L(\vectsymb{\theta}_t) + \lambda R(\vectsymb{\theta}_{t+1}) \le
L(\bar{\vectsymb{\theta}}) + \lambda  R(\bar{\vectsymb{\theta}}) +
\frac{\eta_t }{2} G^2 + \frac{\|\bar{\vectsymb{\theta}} - \vectsymb{\theta}_t\|^2 - \|\bar{\vectsymb{\theta}} - \vectsymb{\theta}_{t+1}\|^2}{2\eta_t }.
\end{eqnarray}
If, in addition, $L$ is $\sigma$-strongly convex, then the bound in \eqref{eq:softprojgradbound2} can be strengthened to
\begin{eqnarray}\label{eq:softprojgradboundstronglyconvex}
L(\vectsymb{\theta}_t) + \lambda R(\vectsymb{\theta}_{t+1}) \le
L(\bar{\vectsymb{\theta}}) + \lambda  R(\bar{\vectsymb{\theta}}) +
\frac{\eta_t }{2} G^2 + \frac{\|\bar{\vectsymb{\theta}} - \vectsymb{\theta}_t\|^2 - \|\bar{\vectsymb{\theta}} - \vectsymb{\theta}_{t+1}\|^2}{2\eta_t}
-\frac{\sigma}{2}\|\bar{\vectsymb{\theta}} - \vectsymb{\theta}_t\|^2.
\end{eqnarray}
\end{lemma}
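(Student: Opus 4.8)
The plan is to track how the squared distance to an arbitrary comparator $\bar{\vectsymb{\theta}}\in\Theta$ evolves across one round of Alg.~\ref{alg:genproxalg} — through the single gradient step, the $J$ proximal steps, and the final projection — and to convert each piece into a statement about $L$ and $R$. To lighten notation, write the intermediate iterates of round $t$ as $\vect{u}_0 = \tilde{\vectsymb{\theta}}_t = \vectsymb{\theta}_t - \eta_t\vect{g}$ and $\vect{u}_j = \tilde{\vectsymb{\theta}}_{t+j/J} = \mathrm{prox}_{\eta_t\lambda R_j}(\vect{u}_{j-1})$ for $j=1,\dots,J$, so that $\vectsymb{\theta}_{t+1}=\Pi_\Theta(\vect{u}_J)$.

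First I would dispose of the proximal block. Applying Corollary~\ref{cor:bound2} (i.e.\ \eqref{eq:bound2}) to the $j$-th step with $\varphi=\eta_t\lambda R_j$, $\vect{x}=\vect{u}_{j-1}$, $\bar{\vect{x}}=\vect{u}_j$ and $\vect{y}=\bar{\vectsymb{\theta}}$ gives
\[
\|\bar{\vectsymb{\theta}} - \vect{u}_j\|^2 - \|\bar{\vectsymb{\theta}} - \vect{u}_{j-1}\|^2 \le 2\eta_t\lambda\bigl(R_j(\bar{\vectsymb{\theta}}) - R_j(\vect{u}_j)\bigr).
\]
Summing over $j$ telescopes the left side to $\|\bar{\vectsymb{\theta}} - \vect{u}_J\|^2 - \|\bar{\vectsymb{\theta}} - \tilde{\vectsymb{\theta}}_t\|^2$ and produces $2\eta_t\lambda\bigl(R(\bar{\vectsymb{\theta}}) - \sum_j R_j(\vect{u}_j)\bigr)$ on the right. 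Because $\bar{\vectsymb{\theta}}\in\Theta$ and $\Pi_\Theta$ is nonexpansive, I may replace $\vect{u}_J$ by $\vectsymb{\theta}_{t+1}$ on the left, only shrinking the distance. The crucial combinatorial step is then to show $\sum_j R_j(\vect{u}_j)\ge R(\vectsymb{\theta}_{t+1})$: condition (iii) gives $R(\vectsymb{\theta}_{t+1})\le R(\vect{u}_J)=\sum_{j'}R_{j'}(\vect{u}_J)$, and condition (ii), applied in turn to the later proximal operators $j'+1,\dots,J$ acting on each fixed $R_{j'}$, gives $R_{j'}(\vect{u}_J)\le R_{j'}(\vect{u}_{j'})$; chaining these yields the claim. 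The outcome of this paragraph is $\|\bar{\vectsymb{\theta}} - \vectsymb{\theta}_{t+1}\|^2 - \|\bar{\vectsymb{\theta}} - \tilde{\vectsymb{\theta}}_t\|^2 \le 2\eta_t\lambda\bigl(R(\bar{\vectsymb{\theta}}) - R(\vectsymb{\theta}_{t+1})\bigr)$.

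Next I would handle the gradient step by expanding $\|\bar{\vectsymb{\theta}} - \tilde{\vectsymb{\theta}}_t\|^2 = \|\bar{\vectsymb{\theta}} - \vectsymb{\theta}_t\|^2 + 2\eta_t\vect{g}^\top(\bar{\vectsymb{\theta}} - \vectsymb{\theta}_t) + \eta_t^2\|\vect{g}\|^2$, bounding the cross term by convexity, $\vect{g}^\top(\bar{\vectsymb{\theta}} - \vectsymb{\theta}_t)\le L(\bar{\vectsymb{\theta}}) - L(\vectsymb{\theta}_t)$, and using $\|\vect{g}\|\le G$. Substituting this into the inequality from the previous paragraph, dividing by $2\eta_t$, and rearranging produces \eqref{eq:softprojgradbound2}. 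For the strongly convex refinement I would only replace the subgradient inequality by its $\sigma$-strongly convex version $\vect{g}^\top(\bar{\vectsymb{\theta}} - \vectsymb{\theta}_t)\le L(\bar{\vectsymb{\theta}}) - L(\vectsymb{\theta}_t) - (\sigma/2)\|\bar{\vectsymb{\theta}} - \vectsymb{\theta}_t\|^2$, which carries the extra term through unchanged and yields \eqref{eq:softprojgradboundstronglyconvex}.

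The hard part will be the combinatorial bookkeeping in the proximal block, namely certifying that the chain of proximal operators followed by the projection does not increase $R$ relative to $\sum_j R_j(\vect{u}_j)$; this is exactly what conditions (ii) and (iii) are engineered to guarantee, and the \emph{order} matters, since (ii) must be invoked for the later operators on each earlier $R_{j'}$ so that the telescoping and the monotonicity are interleaved correctly. One subtlety to watch is that (ii) is applied to the intermediate iterates $\vect{u}_{j'}$, which need not lie in $\Theta$, and at the scaling $\eta_t\lambda$ rather than $\lambda$; so either these monotonicity conditions are to be read as holding on all of $\set{R}^d$ and for all positive multipliers (as they do for the soft-thresholding and group operators of interest), or one verifies separately that the iterates remain in the region where they hold.
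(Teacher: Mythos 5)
Your proposal is correct and follows essentially the same route as the paper's proof: Corollary~\ref{cor:bound2} applied to each of the $J$ proximal steps and telescoped, condition {\bf (ii)} chained over the later operators so that $\sum_j R_j$ evaluated at the intermediate iterates dominates $R(\tilde{\vectsymb{\theta}}_{t+1})$, condition {\bf (iii)} and nonexpansiveness for the projection, and the (strongly convex) subgradient inequality plus $\|\vect{g}\|\le G$ for the gradient step. Your closing remark about condition {\bf (ii)} being invoked at scaling $\eta_t\lambda$ and at iterates not necessarily in $\Theta$ is a fair observation of a detail the paper glosses over, and your resolution (the conditions hold globally and for all positive multipliers for the regularizers of interest) is the right one.
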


\smallskip

A related, but less tight, bound for $J=1$ was derived in \cite{Duchi2009JMLR};
instead of our term $\frac{\eta}{2} G^2$ in \eqref{eq:softprojgradbound2},
the bound of \citep{Duchi2009JMLR} has $7 \frac{\eta}{2} G^2$.%
\footnote{This can be seen from their Eq.~9, setting $A = 0$ and $\eta_t=\eta_{t+\frac{1}{2}}$.} %
When $R = \|\cdot\|_1$, {\sc Fobos} becomes the truncated gradient algorithm of \cite{Langford2009}
and our bound matches the one therein derived, closing the gap between \citep{Duchi2009JMLR} and \citep{Langford2009}.
The classical result in Prop.~\ref{prop:moreau},
relating Moreau projections and Fenchel duality, is the crux of our bound, via Corollary~\ref{cor:bound2}.
Finally, note that the conditions {\bf (i)}--{\bf (iii)} are not restrictive: they hold whenever the
proximity operators are shrinkage functions  ({\it e.g.}, if $R_j = \|\vectsymb{\theta}\|_{p_j}^{q_j}$, with $p_j,q_j \ge 1$).

We next characterize Alg.~\ref{alg:genproxalg} in terms of its
cumulative regret w.r.t.~the best fixed hypothesis, {\it i.e.},
\begin{equation}
\mathrm{Reg}_T \triangleq \sum_{t=1}^T \left( \lambda R(\vectsymb{\theta}_t) + L(\vectsymb{\theta}_t; x_t,y_t) \right) -
\min_{\vectsymb{\theta} \in \Theta} \sum_{t=1}^T \left( \lambda R(\vectsymb{\theta}) + L(\vectsymb{\theta}; x_t,y_t)\right).
\end{equation}

\begin{proposition}[regret bounds with fixed and decaying learning rates]\label{prop:softprojgrad2}
Assume the conditions of Lemma~\ref{lemma:softprojgrad2}, along with $R \ge 0$ and $R(\vect{0})=0$. Then:
\newcounter{sublist}
\begin{list}
{\arabic{sublist}.}{\usecounter{sublist}\setlength{\leftmargin}{0.25cm}\setlength{\itemsep}{0cm}\setlength{\topsep}{0cm}}
\item Running Alg.~\ref{alg:genproxalg} with fixed learning rate $\eta$ yields
\begin{equation}
\mathrm{Reg}_T \le \frac{\eta T}{2} G^2 + \frac{\|\vectsymb{\theta}^*\|^2}{2\eta}, \quad
\text{where $\vectsymb{\theta}^* = \arg \min_{\vectsymb{\theta} \in \Theta} \sum_{t=1}^T \left( \lambda R(\vectsymb{\theta}) + L(\vectsymb{\theta}; x_t,y_t)\right)$}.
\end{equation}
Setting $\eta = \|\vectsymb{\theta}^*\|/(G\sqrt{T})$ yields a sublinear regret of $\|\vectsymb{\theta}^*\|G\sqrt{T}$.
(Note that this requires knowing in advance $\|\vectsymb{\theta}^*\|$ and the number of rounds $T$.)
\item Assume that $\Theta$ is bounded with diameter $F$ ({\it i.e.}, $\forall\vectsymb{\theta},\vectsymb{\theta}' \in \Theta$, $\|\vectsymb{\theta} - \vectsymb{\theta}'\| \le F$).
Let the learning rate be $\eta_t = \eta_0/\sqrt{t}$, with arbitrary $\eta_0 > 0$.
Then,
\begin{eqnarray}\label{eq:regretdecayrate1}
\mathrm{Reg}_T &\le& 
\left(\frac{F^2}{2\eta_0} + G^2 \eta_0\right)\sqrt{T}.
\end{eqnarray}
Optimizing the bound gives $\eta_0 = F/(\sqrt{2}G)$, yielding $\mathrm{Reg}_T \le FG\sqrt{2T}$.
\item If $L$ is $\sigma$-strongly convex, and $\eta_t = 1/(\sigma t)$, we obtain a logarithmic regret bound:
\begin{equation}\label{eq:regretdecayrate2}
\mathrm{Reg}_T \le G^2 (1 + \log T) / (2\sigma).
\end{equation}
\end{list}
\end{proposition}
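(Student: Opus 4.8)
The plan is to obtain all three bounds by summing the per-round inequality of Lemma~\ref{lemma:softprojgrad2} over $t = 1,\ldots,T$ against the fixed competitor $\bar{\vectsymb{\theta}} = \vectsymb{\theta}^*$, then specializing the learning-rate schedule in each case. The one preliminary step is to reconcile the regularizer index: the regret is written in terms of $R(\vectsymb{\theta}_t)$, whereas the lemma controls $L(\vectsymb{\theta}_t) + \lambda R(\vectsymb{\theta}_{t+1})$. First I would observe that, since $\vectsymb{\theta}_1 = \vect{0}$, $R(\vect{0}) = 0$, and $R \ge 0$, telescoping gives $\sum_{t=1}^T R(\vectsymb{\theta}_t) = \sum_{t=1}^T R(\vectsymb{\theta}_{t+1}) - R(\vectsymb{\theta}_{T+1}) \le \sum_{t=1}^T R(\vectsymb{\theta}_{t+1})$. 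Hence $\mathrm{Reg}_T \le \sum_{t=1}^T \bigl(L(\vectsymb{\theta}_t) + \lambda R(\vectsymb{\theta}_{t+1}) - L(\vectsymb{\theta}^*) - \lambda R(\vectsymb{\theta}^*)\bigr)$, and each summand is bounded by Lemma~\ref{lemma:softprojgrad2}. Writing $D_t \triangleq \|\vectsymb{\theta}^* - \vectsymb{\theta}_t\|^2$, this yields $\mathrm{Reg}_T \le \sum_{t=1}^T \bigl(\tfrac{\eta_t}{2}G^2 + \tfrac{D_t - D_{t+1}}{2\eta_t}\bigr)$ in the general case, with an additional $-\tfrac{\sigma}{2}D_t$ per term when $L$ is strongly convex.

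For part 1 (fixed rate $\eta_t \equiv \eta$), the second sum telescopes to $(D_1 - D_{T+1})/(2\eta) \le D_1/(2\eta) = \|\vectsymb{\theta}^*\|^2/(2\eta)$, using $\vectsymb{\theta}_1 = \vect{0}$ and $D_{T+1} \ge 0$, while the first sum is exactly $\tfrac{\eta T}{2}G^2$. Optimizing the resulting $\tfrac{\eta T}{2}G^2 + \tfrac{\|\vectsymb{\theta}^*\|^2}{2\eta}$ over $\eta$ by the AM--GM inequality yields the stated optimal $\eta = \|\vectsymb{\theta}^*\|/(G\sqrt{T})$ and sublinear value $\|\vectsymb{\theta}^*\|G\sqrt{T}$.

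For part 2 (rate $\eta_t = \eta_0/\sqrt{t}$, with $\Theta$ of diameter $F$), I would bound the first sum by $\tfrac{G^2 \eta_0}{2}\sum_{t=1}^T t^{-1/2} \le G^2 \eta_0 \sqrt{T}$ via $\sum_{t=1}^T t^{-1/2} \le 2\sqrt{T}$. Since $\eta_t$ is now non-constant, the second sum calls for Abel summation: $\sum_{t=1}^T \tfrac{D_t - D_{t+1}}{2\eta_t} = \tfrac{D_1}{2\eta_1} + \sum_{t=2}^T D_t\bigl(\tfrac{1}{2\eta_t} - \tfrac{1}{2\eta_{t-1}}\bigr) - \tfrac{D_{T+1}}{2\eta_T}$. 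Because the rate decays, the differenced coefficients $\tfrac{1}{2\eta_t} - \tfrac{1}{2\eta_{t-1}}$ are nonnegative, so replacing each $D_t$ by its diameter bound $F^2$ and telescoping collapses the whole expression to at most $F^2/(2\eta_T) = F^2\sqrt{T}/(2\eta_0)$. Adding the two contributions gives $\bigl(\tfrac{F^2}{2\eta_0} + G^2\eta_0\bigr)\sqrt{T}$, and a one-variable optimization over $\eta_0$ produces $\eta_0 = F/(\sqrt{2}G)$ and $FG\sqrt{2T}$.

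Part 3 (strongly convex $L$, rate $\eta_t = 1/(\sigma t)$) is where the real work sits, and I expect the delicate point to be the exact cancellation between the telescoping term and the strong-convexity correction rather than any conceptual difficulty. Here $\tfrac{1}{2\eta_t} = \tfrac{\sigma t}{2}$, and applying the same Abel summation to $\sum_t \tfrac{\sigma t}{2}(D_t - D_{t+1})$—using that consecutive coefficients differ by exactly $\sigma/2$—gives $\tfrac{\sigma}{2}\sum_{t=1}^T D_t - \tfrac{\sigma T}{2}D_{T+1}$. Subtracting the strong-convexity sum $\sum_t \tfrac{\sigma}{2}D_t$ leaves precisely $-\tfrac{\sigma T}{2}D_{T+1} \le 0$, so only the Lipschitz term survives and $\mathrm{Reg}_T \le \tfrac{G^2}{2\sigma}\sum_{t=1}^T \tfrac{1}{t} \le \tfrac{G^2}{2\sigma}(1 + \log T)$. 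Apart from getting the Abel-summation bookkeeping and coefficient signs right in parts 2 and 3, the remainder is telescoping and elementary optimization.
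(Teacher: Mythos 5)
Your proposal is correct and follows essentially the same route as the paper's own proof: summing Lemma~\ref{lemma:softprojgrad2} over rounds, using $\vectsymb{\theta}_1=\vect{0}$, $R(\vect{0})=0$, and $R\ge 0$ to shift the regularizer index, then Abel summation with the diameter bound for the $\eta_0/\sqrt{t}$ schedule and the exact cancellation $\frac{1}{\eta_t}-\frac{1}{\eta_{t-1}}-\sigma=0$ in the strongly convex case. No gaps.
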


Similarly to other analyses of online learning algorithms, once an online-to-batch conversion is specified, regret bounds allow us to
obtain PAC bounds on optimization and generalization errors.
The following proposition can be proved using the same techniques as in \citep{Cesabianchi2004,ShalevShwartz2007ICML}.
\begin{proposition}[optimization and estimation error]
If the assumptions of Prop.~\ref{prop:softprojgrad2} hold and $\eta_t = \eta_0/\sqrt{t}$ as in 2., then
the version of Alg.~\ref{alg:genproxalg} that returns the averaged model
solves the optimization problem \eqref{eq:learningproblem} with accuracy $\epsilon$
in $T = O((F^2 G^2 + \log(1/\delta))/\epsilon^2)$ iterations, with
probability at least $1-\delta$.
If $L$ is also $\sigma$-strongly convex and  $\eta_t = 1/(\sigma t)$ as in 3., then, for the version of Alg.~\ref{alg:genproxalg}
that returns $\vectsymb{\theta}_{T+1}$,  we get $T = \tilde{O}(G^2/(\sigma \delta \epsilon))$.
The generalization bounds are of the same orders.
\end{proposition}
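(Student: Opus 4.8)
The plan is to use a standard online-to-batch conversion, turning the cumulative regret bounds of Prop.~\ref{prop:softprojgrad2} into high-probability guarantees on a single output model. First I would fix the sampling scheme so that each round draws $(x_t,y_t)$ independently of the current iterate $\vectsymb{\theta}_t$ (i.i.d.\ from $\mathcal{D}$ for the optimization error, and i.i.d.\ from the underlying distribution for the generalization error). Writing $f_t(\vectsymb{\theta}) \triangleq \lambda R(\vectsymb{\theta}) + L(\vectsymb{\theta};x_t,y_t)$ for the per-round objective and $P$ for the corresponding expected objective (the batch objective \eqref{eq:learningproblem} in the first case, the regularized true risk in the second), the crucial observation is that $\mathbb{E}[f_t(\vectsymb{\theta})\mid\mathcal{F}_{t-1}] = P(\vectsymb{\theta})$ whenever $\vectsymb{\theta}$ depends only on the past. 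Let $\vectsymb{\theta}^* = \arg\min_{\vectsymb{\theta}\in\Theta} P(\vectsymb{\theta})$ be the comparison point.

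For the convex case (item 2 of Prop.~\ref{prop:softprojgrad2}), I would introduce the martingale differences $\xi_t \triangleq (P(\vectsymb{\theta}_t) - P(\vectsymb{\theta}^*)) - (f_t(\vectsymb{\theta}_t) - f_t(\vectsymb{\theta}^*))$, which satisfy $\mathbb{E}[\xi_t\mid\mathcal{F}_{t-1}] = 0$. The regularizer terms cancel inside $\xi_t$, so $G$-Lipschitzness of $L$ together with $\mathrm{diam}(\Theta) = F$ bounds $|\xi_t| \le 2GF$. Using the definition of regret with $\vectsymb{\theta}^*$ as the (feasible) comparator, I would write $\sum_{t=1}^T (P(\vectsymb{\theta}_t) - P(\vectsymb{\theta}^*)) \le \mathrm{Reg}_T + \sum_{t=1}^T \xi_t$, bound $\mathrm{Reg}_T \le FG\sqrt{2T}$ by Prop.~\ref{prop:softprojgrad2}, and control $\sum_t \xi_t$ by a martingale concentration inequality. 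A crude Hoeffding--Azuma bound already gives $O(\sqrt{T})$ deviation; the sharper Bernstein/Freedman inequality used by \citet{Cesabianchi2004} is what yields the \emph{additive} $\log(1/\delta)$ term rather than a multiplicative one. Finally, convexity of $P$ and Jensen's inequality give $P(\bar{\vectsymb{\theta}}) \le \frac{1}{T}\sum_t P(\vectsymb{\theta}_t)$ for the averaged model $\bar{\vectsymb{\theta}}$; dividing the accumulated bound by $T$ and setting it equal to $\epsilon$ yields $T = O((F^2G^2 + \log(1/\delta))/\epsilon^2)$.

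For the strongly convex case (item 3), $P$ inherits $\sigma$-strong convexity from $L$, and the logarithmic regret $\mathrm{Reg}_T \le G^2(1+\log T)/(2\sigma)$ gives an \emph{expected} suboptimality of order $\tilde{O}(G^2/(\sigma T))$. I would then convert to a high-probability statement by the cheaper route of Markov's inequality (as in the {\sc Pegasos} analysis of \citealt{ShalevShwartz2007ICML}), $\prob\{P(\cdot) - P(\vectsymb{\theta}^*) > \epsilon\} \le \mathbb{E}[P(\cdot) - P(\vectsymb{\theta}^*)]/\epsilon$, which explains the $1/\delta$ (rather than $\log(1/\delta)$) dependence and the resulting $T = \tilde{O}(G^2/(\sigma\delta\epsilon))$. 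For the generalization bounds I would repeat the martingale argument verbatim with $P$ taken as the regularized true risk; since the regret analysis never used which distribution the $(x_t,y_t)$ come from, the orders carry over unchanged.

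The main obstacle I anticipate is the strongly convex last-iterate claim: Prop.~\ref{prop:softprojgrad2} controls the \emph{average} $\frac{1}{T}\sum_t f_t(\vectsymb{\theta}_t)$, whereas item~3 of the proposition concerns $\vectsymb{\theta}_{T+1}$. Relating the two at the $\tilde{O}(1/T)$ rate is delicate for non-smooth $L$: telescoping the strongly convex per-round bound \eqref{eq:softprojgradboundstronglyconvex} gives $\|\vectsymb{\theta}_{T+1}-\vectsymb{\theta}^*\|^2 = O(G^2/(\sigma^2 T))$, but passing to a \emph{function-value} bound via Lipschitzness only yields $O(1/\sqrt{T})$, so a dedicated last-iterate (or suffix-averaging) argument is needed to recover the $\tilde{O}(1/T)$ rate. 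By contrast, verifying the bounded-difference and variance conditions for the martingale inequality (from $G$-Lipschitzness and the diameter $F$) is routine.
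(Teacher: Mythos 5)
Your proposal is correct and is essentially the paper's own argument: the paper gives no explicit proof, only the remark that the result follows ``using the same techniques as in \citep{Cesabianchi2004,ShalevShwartz2007ICML},'' which is precisely the martingale-plus-Jensen online-to-batch conversion you describe for the averaged model and the Markov-inequality route (explaining the $1/\delta$ factor) you describe for the strongly convex case. Your caveat about the last-iterate claim for $\vectsymb{\theta}_{T+1}$ is well taken, but it is a gap in the cited technique as invoked by the paper rather than a divergence of your reconstruction from it --- the paper does not address that subtlety either.
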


We now pause to see how the analysis applies to some concrete cases.
The requirement that the loss is $G$-Lipschitz holds for
the hinge and logistic losses, where $G = 2\max_{u\in\sett{U}} \|\vectsymb{\phi}(u)\|$ (see Appendix~\ref{sec:lipschitz}).
These losses are not strongly convex,
and therefore Alg.~\ref{alg:genproxalg} has only $O(1/\epsilon^2)$ convergence.
If the \emph{regularizer} $R$ is $\sigma$-strongly convex,
a possible workaround to obtain $\tilde{O}(1/\epsilon)$ convergence is to let $L$
``absorb'' that strong convexity by
redefining
$\tilde{L}(\vectsymb{\theta};x_t,y_t) = L(\vectsymb{\theta};x_t,y_t) + \sigma\|\vectsymb{\theta}\|^2/2$.
Since neither the $\ell_{2,1}$-norm nor its square are
strongly convex, we cannot use this trick for the MKL case \eqref{eq:learnkernel2},
but it \emph{does} apply for non-sparse MKL \citep{Kloft2010} ($\ell_{2,q}$-norms are strongly convex for $q > 1$)
and for elastic net MKL \citep{Suzuki2009}.
Still, the $O(1/\epsilon^2)$ rate for MKL is
competitive with the best batch algorithms; {\it e.g.},
the method in \cite{Xu2009} achieves $\epsilon$ primal-dual
gap in $O(1/\epsilon^2)$ iterations. 
Some losses of interest ({\it e.g.}, the squared loss, or the modified loss $\tilde{L}$ above)
are $G$-Lipschitz in any compact subset of
$\set{R}^d$ but not in $\set{R}^d$.
However, if it is known in advance that
the optimal solution must lie in some compact convex set $\Theta$,
we can add a vacuous constraint and run Alg.~\ref{alg:genproxalg} with the projection step,
making the analysis still applicable; we present concrete examples in Appendix~\ref{sec:lipschitz}.

\subsection{Online MKL} \label{subsec:fobosmkl}
The instantiation of Alg.~\ref{alg:genproxalg} for $R(\vectsymb{\theta}) = \frac{1}{2} \|\vectsymb{\theta}\|^2_{2,1}$
 yields Alg.~\ref{alg:onlinemkl}. We consider $L = L_{\mathrm{SVM}}$; adapting
to any generalized linear model ({\it e.g.}, $L = L_{\mathrm{CRF}}$) is straightforward.
As discussed in the last paragraph of \S\ref{subsec:regret}, it may be necessary to consider ``vacuous''
projection steps to ensure fast convergence. Hence, an optional upper bound $\gamma$ on $\|\vectsymb{\theta}\|$
is accepted as input. Suitable values of $\gamma$ for the SVM and CRF case are
given in  Appendix~\ref{sec:lipschitz}. %
In line~\ref{algline:mklscores}, the scores of candidate outputs are computed groupwise;
in structured prediction (see \S\ref{sec:strucpredmkl}), a factorization
over parts is assumed and the scores are for partial output assignments (see \cite{Taskar2003,Tsochantaridis2004} for details).
The key novelty of Alg.~\ref{alg:onlinemkl} is in line~\ref{algline:mklproximalstep}, where
the group structure is taken into account, by applying a proximity operator which
corresponds to a groupwise shrinkage/thresolding, where some groups may be set to zero.

Although Alg.~\ref{alg:onlinemkl} is written in parametric form, 
it can be kernelized, as shown next (one can also use explicit features in some groups,
and implicit in others). Observe that the parameters of the $k$th group after round $t$ can be written as
$\vectsymb{\theta}_k^{t+1} = \sum_{s = 1}^t \alpha_{ks}^{t+1} (\vectsymb{\phi}_k(x_s, y_s) - \vectsymb{\phi}_k(x_s, \hat{y}_s))$, where
\[
\alpha_{ks}^{t+1} = \eta_s \prod_{r = s}^t \left( (b_k^r/\tilde{b}_k^r) \min\{1, \gamma / \|\tilde{\vectsymb{\theta}}^{r+1}\|\} \right)
= \left\{
\begin{array}{ll}
\eta_t (b_k^t/\tilde{b}_k^t) \min\{1, \gamma / \|\tilde{\vectsymb{\theta}}^{t+1}\|\} & \text{if $s = t$}\\
\alpha_{ks}^{t} (b_k^t/\tilde{b}_k^t) \min\{1, \gamma / \|\tilde{\vectsymb{\theta}}^{t+1}\|\} & \text{if $s < t$.}
\end{array}
\right.
\]
Therefore, the inner products  in line~\ref{algline:mklscores} can be kernelized.
The cost of this step is $O(\min\{m,t\})$, instead of the $O(d_k)$ (where $d_k$
is the dimension of the $k$th group) for the explicit feature case.
After the decoding step (line \ref{algline:mkldecode}), the
supporting pair $(x_t,\hat{y}_t)$ is stored.
Lines~\ref{algline:mklproximalstep_0}, \ref{algline:mklprojectionstep} and \ref{algline:mklweights}
require the \emph{norm} of each group, which can be manipulated using kernels:
indeed, after each gradient step (line \ref{algline:mklgradientstep}), we have (denoting $u_t=(x_t,y_t)$ and $\hat{u}_t=(x_t,\hat{y}_t)$):
\begin{eqnarray}
\|\tilde{\vectsymb{\theta}}_k^{t}\| ^2 &=& \|\vectsymb{\theta}^t_k\|^2 -2\eta_t \langle \vectsymb{\theta}^t_k, \vectsymb{\phi}_k(x_t, y_t) \rangle
+ \eta_t^2\|\vectsymb{\phi}_k(x_t, \hat{y}_t) - \vectsymb{\phi}_k(x_t, y_t)\|^2\nonumber\\
&=& \|\vectsymb{\theta}^t_k\|^2 -2\eta_t f_k(\hat{u}_t)
+ \eta_t^2 (K_k(u_t,u_t)+ K_k(\hat{u}_t,\hat{u}_t) - 2K_k(u_t,\hat{u}_t));
\end{eqnarray}
and the proximal and projection steps merely scale these norms.
When the algorithm terminates, it returns the kernel weights $\vectsymb{\beta}$ and the sequence $(\alpha_{kt}^{T+1})$.

\begin{algorithm}[t]
   \caption{Online-{\sc mkl} \label{alg:onlinemkl}}
\begin{algorithmic}[1]
   \STATE {\bfseries input:} $\mathcal{D}$, $\lambda$, $T$, radius $\gamma$,
   learning rate sequence $(\eta_t)_{t = 1,\ldots,T}$
   \STATE initialize $\vectsymb{\theta}^1 \leftarrow \vect{0}$
	\FOR{$t=1$ {\bfseries to} $T$}
	\STATE take an instance $x_t$, $y_t$ and compute scores $f_k(x_t,y_t') = \langle \vectsymb{\theta}^t_k, \vectsymb{\phi}_k(x_t, y_t') \rangle$, for $k=1,\ldots,p$ \label{algline:mklscores}
	\STATE decode: $\hat{y}_t \in \argmax_{y_t' \in \sett{Y}(x)} \sum_{k=1}^p f_k(x_t,y_t') + \ell(y_t',y_t)$ \label{algline:mkldecode}
	\STATE Gradient step: $\tilde{\vectsymb{\theta}}_k^{t} = \vectsymb{\theta}^t_k - \eta_t (\vectsymb{\phi}_k(x_t, \hat{y}_t) - \vectsymb{\phi}_k(x_t, y_t))$ \label{algline:mklgradientstep}
	\STATE compute weights $\tilde{b}_k^t \!=\! \|\tilde{\vectsymb{\theta}}_k^t\|$,  $k\!=\!1,\ldots,p$,
	and shrink them $\vect{b}^t \!=\! \mathrm{prox}_{\eta_t \lambda \|.\|_{2,1}^2}(\vect{\tilde{b}}^t)$ with Alg.~\ref{alg:softproj} \label{algline:mklproximalstep_0}
	\vspace{-.3cm}	
	\STATE Proximal step: $\tilde{\vectsymb{\theta}}^{t+1}_k = (b_k^t / \tilde{b}_k^t) \cdot \tilde{\vectsymb{\theta}}_k^{t}$, for $k=1,\ldots,p$ \label{algline:mklproximalstep}
	\STATE Projection step: ${\vectsymb{\theta}}^{t+1} = \tilde{\vectsymb{\theta}}^{t+1} \cdot \min\{1, \gamma / \|\tilde{\vectsymb{\theta}}^{t+1}\|\}$  \label{algline:mklprojectionstep}
	\ENDFOR
	\STATE compute $\beta_k = \|\vectsymb{\theta}_k^{T+1}\| / \sum_{l=1}^p \|\vectsymb{\theta}_l^{T+1}\|$, for $k=1,\ldots,p$ \label{algline:mklweights}
   \STATE return $\vectsymb{\beta}$, and the last model $\vectsymb{\theta}^{T+1}$
\end{algorithmic}
\end{algorithm}

In case of sparse explicit features, an implementation trick analogous to the one used in
\citep{ShalevShwartz2007ICML} (where each $\vectsymb{\theta}_k$ is
represented by its norm and an unnormalized vector) can substantially reduce
the amount of computation. In the case of implicit features with a sparse
kernel matrix, a sparse storage of this matrix can also  significantly
speed up the algorithm, eliminating its dependency on $m$ in line~\ref{algline:mklscores}.
Note also that all steps involving group-specific computation can be carried out in parallel using
multiple machines, which makes Alg.~\ref{alg:onlinemkl} suitable for combining many kernels (large $p$).

\section{Experiments}\label{sec:experiments}
\paragraph{Handwriting recognition.}
We use the OCR dataset of \cite{Taskar2003} ({\footnotesize{\url{www.cis.upenn.edu/~taskar/ocr}}}),
which has $6877$ words written by $150$ people ($52152$ characters).
Each character is a $16$-by-$8$ binary image, {\it i.e.}, a $128$-dimensional vector (our input)
and has one of $26$ labels ({\tt a-z}; the outputs to predict). Like in \citep{Taskar2003}, we address this
sequence labeling problem with a structured SVM; however, we \emph{learn} the
kernel from the data, via Alg.~\ref{alg:onlinemkl}.
We use an indicator basis function to represent the correlation between
consecutive outputs. Our first experiment (reported in the upper part of
Tab.~\ref{tab:resocr}) compares linear, quadratic, and Gaussian kernels,
either used individually, combined
via a simple average, or with MKL. The results show that  MKL outperforms
the others by $2\%$ or more.

The second experiment aims at showing the ability of Alg.~\ref{alg:onlinemkl} to exploit
both \emph{feature} and \emph{kernel} sparsity by learning a combination of a
\emph{linear} kernel (explicit features) with a \emph{generalized $B_1$-spline} kernel,
given by $K(\vect{x},\vect{x}') = \max\{0, 1 - \|\vect{x}-\vect{x}'\|/h\}$, with
$h$ chosen so that the kernel matrix has $\sim 95\%$ zeros. The rationale is to
combine the strength of a simple feature-based kernel with that of one 
depending only on a few nearest neighbors. The results (Tab.~\ref{tab:resocr},
bottom part) show that the MKL outperforms by $\sim 10\%$ the individual kernels,
and by more than $2\%$ the averaged kernel. Perhaps more importantly, 
the accuracy is not much worse than the best one obtained
in the previous experiment, while the runtime is much faster ($15$ versus $279$ seconds).  

\begin{table}[t]
\centering
{\small
\begin{tabular}{l|cc}
\multicolumn{1}{c|}{Kernel}  &\multicolumn{1}{c}{\rotatebox{0}{Training}} &\multicolumn{1}{c}{\rotatebox{0}{Test Acc.}} \\
\multicolumn{1}{c|}{}  &\multicolumn{1}{c}{\rotatebox{0}{Runtimes}} &\multicolumn{1}{c}{\rotatebox{0}{(per char.)}} \\
\hline
Linear ($L$)         					& 6 sec. & $72.8 \pm 4.4\%$ \\
Quadratic ($Q$)   						& 116 sec. & $85.5 \pm 0.3\%$ \\
Gaussian ($G$) ($\sigma^2=5$)  		& 123 sec. & $84.1 \pm 0.4\%$ \\
Average $(L+Q+G)/3$    		& 118 sec. & $84.3 \pm 0.3\%$ \\
MKL $\beta_1 L+\beta_2 Q+\beta_3 G$	& 279 sec. & $87.5 \pm 0.4\%$ \\
\hline
$B_1$-Spline ($B_1$)			& 8 sec. & $75.4 \pm 0.9\%$\\
Average $(L + B_1)/2$    	& 15 sec.  &  $83.0 \pm 0.3\%$ \\
MKL $\beta_1 L+\beta_2 B_1$ & 15 sec. & $85.2 \pm 0.3\%$ \\
\hline\hline
\end{tabular}}
\caption{Results for handwriting recognition. Averages over $10$ runs on the same folds as in \citep{Taskar2003},
training on one and testing on the others. The linear and quadratic kernels
are normalized to unit diagonal. In all cases, $20$ epochs were used,
with $\eta_0$ in \eqref{eq:regretdecayrate1} picked from
$\{0.01,0.1,1,10\}$ by selecting the one that most decreases the objective after
$5$ epochs. Results are for the best regularization coefficient $C = 1/(\lambda m)$
(chosen from $\{0.1, 1,10,10^{2},10^{3},10^{4}\})$.}\label{tab:resocr}
\end{table}

\paragraph{Dependency parsing.}
We trained non-projective dependency parsers for English,
using the dataset from the CoNLL-2008 shared task \cite{Surdeanu2008}
($39278$ training sentences, $\sim 10^6$ tokens, and $2399$ test sentences).
The output to be predicted from each input sentence is the set of dependency
arcs,  linking \emph{heads} to \emph{modifiers}, that must define a
spanning tree (see example in Fig.~\ref{fig:deptree}).
We use arc-factored models, where the feature vectors decompose as
$\vectsymb{\phi}(x,y) = \sum_{(h,m) \in y} \vectsymb{\phi}_{h,m}(x)$.
Although they are not the state-of-the-art for this task,
exact inference is tractable via minimum spanning tree
algorithms \citep{McDonald2005b}.
We defined $507$ feature templates for each candidate arc by conjoining the words, lemmas, and parts-of-speech of
the head $h$ and the modifier $m$, as well as the parts-of-speech of the surrounding words, and the distance and direction of attachment.
This yields a large scale problem, with $>50$ million features
instantiated. The feature vectors associated with each candidate arc, however, are very sparse and this is exploited in
the implementation. We ran Alg.~\ref{alg:onlinemkl} with explicit features, with each group standing
for a feature template.
MKL did not outperform a standard SVM in this experiment ($90.67\%$ against $90.92\%$); however, it
showed a good performance at pruning irrelevant feature templates
(see Fig.~\ref{fig:deptree}, bottom right). Besides \emph{interpretability}, which may be useful for
the understanding of the syntax of natural languages, this pruning is also appealing in a
two-stage architecture, where a standard learner at a second stage will only need to handle
a small fraction of the templates initially hypothesized.

\begin{figure}[t]
\centering
\begin{tabular}[b]{c}
\includegraphics[width=0.5\columnwidth]{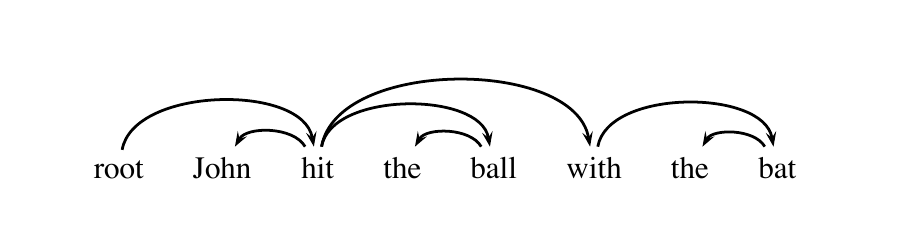}\\
\includegraphics[width=0.27\columnwidth]{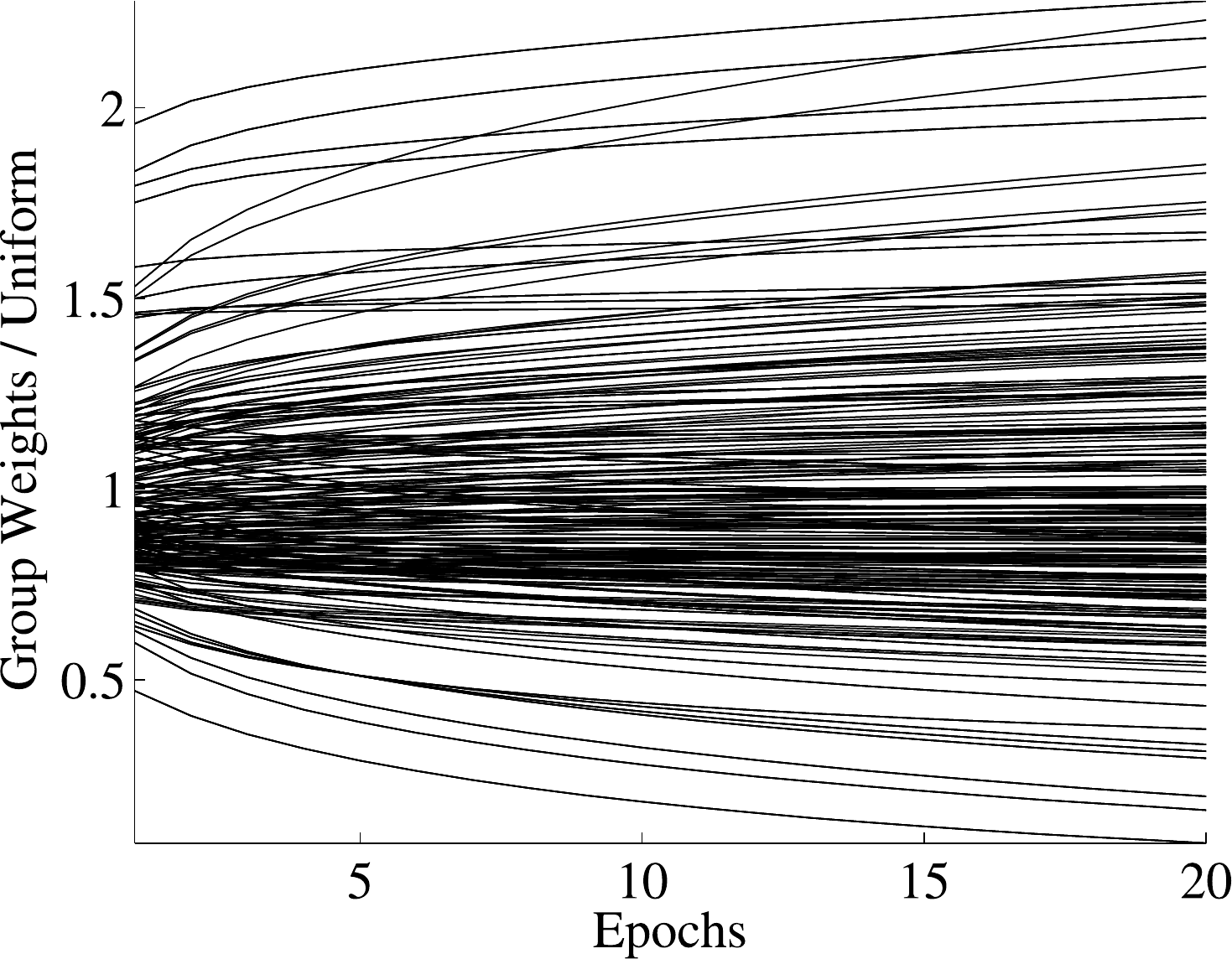} %
\includegraphics[width=0.27\columnwidth]{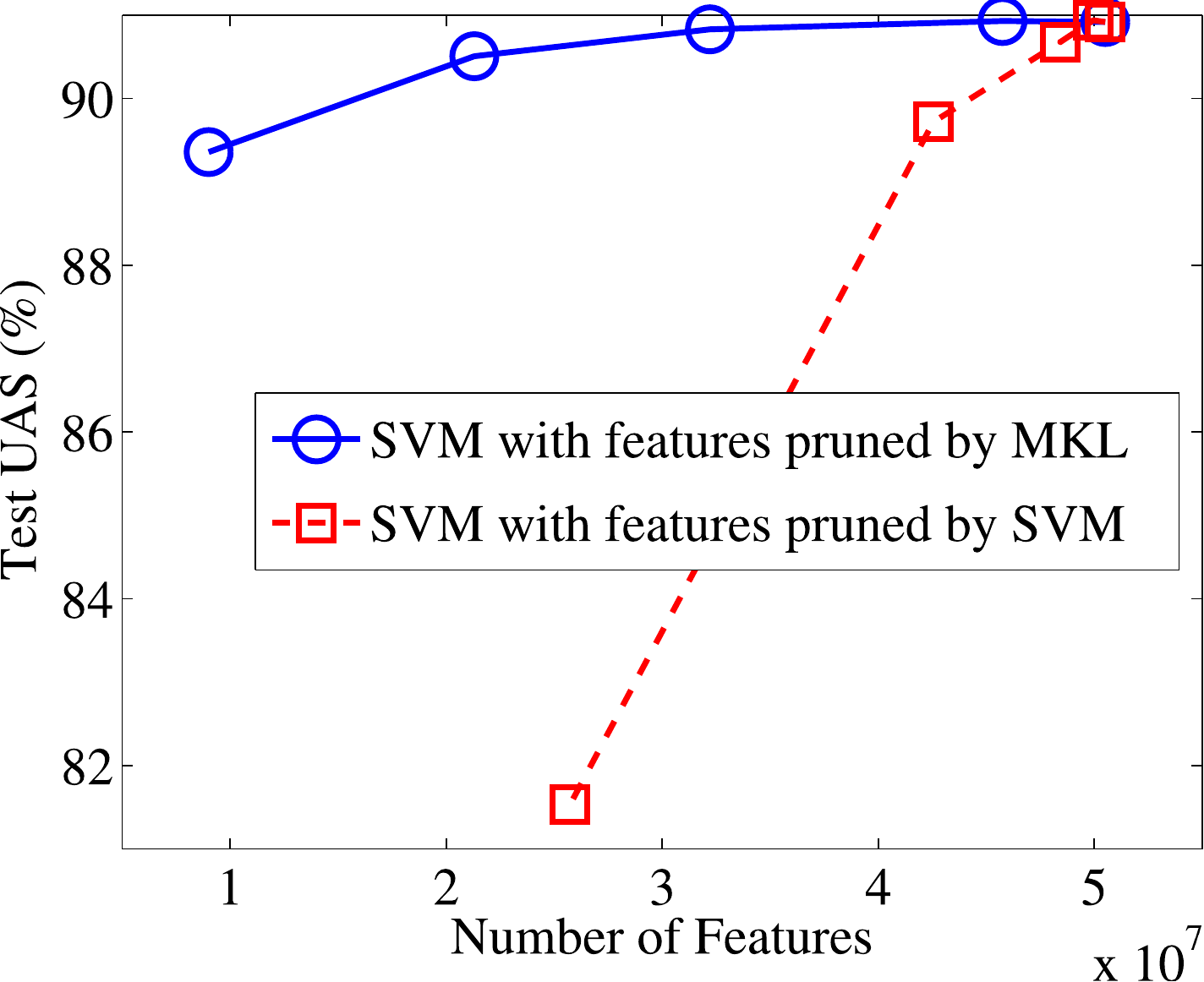}\\
\end{tabular}
\caption{Top: a dependency parse tree (adapted from \citep{McDonald2005b}).
Bottom left: group weights along the epochs of Alg.~\ref{alg:onlinemkl}.
Bottom right: results of standard SVMs trained on sets of feature templates of
sizes $\{107,207,307,407,507\}$, either selected via a standard SVM or by MKL
(the UAS---{\it unlabeled attachment score}---is the
fraction of non-punctuation words whose head was correctly assigned.)}\label{fig:deptree}
\end{figure}

\section{Conclusions}\label{sec:conclusion}
We introduced a
new class of online proximal algorithms that extends {\sc Fobos} and is
applicable to many variants of MKL and group-{\sc lasso}.
We provided regret, convergence, and generalization bounds,
and used the algorithm for learning the kernel
in large-scale structured prediction tasks.

Our work may impact other problems. In structured prediction,
the ability to promote structural sparsity suggests that it is
possible to learn simultaneously the structure and the parameters
of the graphical models. The ability to learn the kernel online
offers a new paradigm for problems in which the underlying
geometry (induced by the similarities between
objects) evolves over time: algorithms that
adapt the kernel while learning are robust to
certain kinds of concept drift. We plan to explore these directions in future work.




\appendix

\section{Proof of Proposition~\ref{prop:moreaugroupreduct}}\label{sec:proof_prop_moreaugroupreduct}

We have respectively:
\begin{eqnarray}
M_{\varphi}(\vect{x}_1,\ldots,\vect{x}_p) &=& \min_{\vect{y}} \frac{1}{2}\|\vect{y} - \vect{x}\|^2 + \varphi(\vect{y})\nonumber\\
&=& \min_{\vect{y}_1,\ldots,\vect{y}_p} \frac{1}{2}\sum_{k=1}^p\|\vect{y}_k - \vect{x}_k\|^2 + \psi(\|\vect{y}_1\|,\ldots,\|\vect{y}_p\|)\nonumber\\
&=& \min_{\vect{u}\in \set{R}^p_+}  \psi(u_1,\ldots,u_p) + \min_{\vect{y} : \|\vect{y}_k\| = u_k, \forall k} \frac{1}{2}\sum_{k=1}^p\|\vect{y}_k - \vect{x}_k\|^2  \nonumber\\
&=& \min_{\vect{u}\in \set{R}^p_+}  \psi(u_1,\ldots,u_p) + \frac{1}{2}\sum_{k=1}^p \min_{\vect{y}_k : \|\vect{y}_k\| = u_k} \|\vect{y}_k - \vect{x}_k\|^2 \,\,\,\,(^*) \nonumber\\
&=& \min_{\vect{u}\in \set{R}^p_+}  \psi(u_1,\ldots,u_p) + \frac{1}{2}\sum_{k=1}^p\left\|\frac{u_k}{\|\vect{x}_k\|}\vect{x}_k - \vect{x}_k\right\|^2 \nonumber\\
&=& \min_{\vect{u}\in \set{R}^p_+}  \psi(u_1,\ldots,u_p) + \frac{1}{2}\sum_{k=1}^p(u_k - \|\vect{x}_k\|)^2 \nonumber\\
&=& M_{\psi}(\|\vect{x}_1\|,\ldots,\|\vect{x}_p\|),
\end{eqnarray}
where the solution of the innermost minimization problem in $(^*)$ is $\vect{y}_k = \frac{u_k}{\|\vect{x}_k\|}\vect{x}_k$, and therefore
$[\mathrm{prox}_{\varphi}(\vect{x}_1,\ldots,\vect{x}_p)]_k = [\mathrm{prox}_{\psi}(\|\vect{x}_1\|,\ldots,\|\vect{x}_p\|)]_k \frac{\vect{x}_k}{\|\vect{x}_k\|}$.

\section{Proof of Corollary~\ref{cor:bound2}}\label{sec:proof_cor_bound2}

We start by stating and proving the following lemma:
\begin{lemma}\label{lemma:bound1}
Let $\varphi:\set{R}^p \rightarrow \bar{\set{R}}$ be as in Prop.~\ref{prop:moreau},
and let $\bar{\vect{x}} \triangleq \mathrm{prox}_{\varphi}(\vect{x})$.
Then, any $\vect{y} \in \set{R}^p$ satisfies
\begin{eqnarray}\label{eq:bound1}
(\bar{\vect{x}} - \vect{y})^\top (\bar{\vect{x}} - \vect{x}) \le
\varphi(\vect{y}) - \varphi(\bar{\vect{x}})
\end{eqnarray}
\end{lemma}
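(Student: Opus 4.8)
The plan is to exploit the variational definition of the proximity operator together with the first-order optimality condition for convex minimization. By definition, $\bar{\vect{x}} = \mathrm{prox}_{\varphi}(\vect{x})$ is the unique minimizer of the function $g(\vect{z}) \triangleq \frac{1}{2}\|\vect{z} - \vect{x}\|^2 + \varphi(\vect{z})$; existence and uniqueness hold because the quadratic term makes $g$ strongly convex while $\varphi$ is convex, lsc, and proper, so the whole objective is a proper, lsc, strongly convex function. This is exactly what guarantees that $\mathrm{prox}_{\varphi}$ is single-valued in the first place.

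First I would write down the stationarity condition at $\bar{\vect{x}}$. Since the quadratic part $\frac{1}{2}\|\cdot - \vect{x}\|^2$ is finite and differentiable everywhere, the Moreau--Rockafellar sum rule applies without any constraint qualification and gives $\partial g(\bar{\vect{x}}) = (\bar{\vect{x}} - \vect{x}) + \partial \varphi(\bar{\vect{x}})$. The optimality condition $\vect{0} \in \partial g(\bar{\vect{x}})$ then yields
\begin{equation*}
\vect{x} - \bar{\vect{x}} \in \partial \varphi(\bar{\vect{x}}),
\end{equation*}
that is, the displacement $\vect{x} - \bar{\vect{x}}$ is a subgradient of $\varphi$ at the proximal point $\bar{\vect{x}}$.

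Next I would simply invoke the defining subgradient inequality. Taking $\vect{g} = \vect{x} - \bar{\vect{x}} \in \partial \varphi(\bar{\vect{x}})$ in the definition of the subdifferential gives, for every $\vect{y} \in \set{R}^p$,
\begin{equation*}
\varphi(\vect{y}) - \varphi(\bar{\vect{x}}) \ge (\vect{x} - \bar{\vect{x}})^\top (\vect{y} - \bar{\vect{x}}).
\end{equation*}
Rewriting the right-hand side as $(\vect{x} - \bar{\vect{x}})^\top (\vect{y} - \bar{\vect{x}}) = (\bar{\vect{x}} - \vect{x})^\top (\bar{\vect{x}} - \vect{y}) = (\bar{\vect{x}} - \vect{y})^\top (\bar{\vect{x}} - \vect{x})$ reproduces exactly \eqref{eq:bound1}.

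The only step that demands a moment of care is the passage from the minimization to the inclusion $\vect{x} - \bar{\vect{x}} \in \partial\varphi(\bar{\vect{x}})$, i.e.\ justifying the additivity of the subdifferential and the Fermat rule in the nonsmooth setting; but this is routine here precisely because one summand is smooth and everywhere finite, so no pathology can occur. Everything after the optimality condition is a single rearrangement of an inner product, so I expect the proof to be short once the characterization of $\bar{\vect{x}}$ as a subgradient is in place.
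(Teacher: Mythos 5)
Your proof is correct, but it takes a genuinely different route from the paper's. You characterize $\bar{\vect{x}} = \mathrm{prox}_{\varphi}(\vect{x})$ through the first-order optimality condition: by the Moreau--Rockafellar sum rule (applicable since the quadratic term is finite and smooth everywhere), $\vect{0} \in (\bar{\vect{x}} - \vect{x}) + \partial\varphi(\bar{\vect{x}})$, hence $\vect{x} - \bar{\vect{x}} \in \partial\varphi(\bar{\vect{x}})$, and the subgradient inequality at $\bar{\vect{x}}$ with direction $\vect{y} - \bar{\vect{x}}$ gives \eqref{eq:bound1} after a sign rearrangement. The paper instead starts from the Moreau decomposition \eqref{eq:moreau}, writes $\frac{1}{2}\|\vect{x}\|^2 = M_{\varphi}(\vect{x}) + M_{\varphi^\star}(\vect{x})$, expands $M_{\varphi^\star}(\vect{x}) = \frac{1}{2}\|\bar{\vect{x}}\|^2 + \varphi^\star(\vect{x}-\bar{\vect{x}})$ using $\mathrm{prox}_{\varphi^\star}(\vect{x}) = \vect{x} - \bar{\vect{x}}$, and then lower-bounds the supremum defining $\varphi^\star(\vect{x}-\bar{\vect{x}})$ by evaluating at $\vect{u}=\vect{y}$; the paper explicitly flags that the Fenchel conjugate is ``the crux'' of the argument. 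Your derivation shows that the conjugate is not actually needed: the inclusion $\vect{x}-\bar{\vect{x}} \in \partial\varphi(\bar{\vect{x}})$ is the classical resolvent characterization of the proximity operator and yields the inequality in one line. What the paper's route buys is that it stays entirely within the machinery it has already set up (Moreau envelopes and Prop.~\ref{prop:moreau}) and never has to invoke a subdifferential calculus rule; what your route buys is brevity and a cleaner conceptual picture of where the inequality comes from. Both are complete and rigorous.
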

\begin{proof}
From \eqref{eq:moreau}, we have that
\begin{eqnarray}
\frac{1}{2}\|\vect{x}\|^2
&=& \frac{1}{2}\|\bar{\vect{x}} - \vect{x}\|^2 + \varphi(\bar{\vect{x}}) + \frac{1}{2}\|\bar{\vect{x}}\|^2 + \varphi^*(\vect{x}-\bar{\vect{x}}) \nonumber\\
&=& \frac{1}{2}\|\bar{\vect{x}} - \vect{x}\|^2 + \varphi(\bar{\vect{x}}) + \frac{1}{2}\|\bar{\vect{x}}\|^2 + \sup_{\vect{u} \in \set{R}^p} \left( \vect{u}^\top(\vect{x}-\bar{\vect{x}}) - \varphi(\vect{u}) \right) \nonumber\\
&\ge& \frac{1}{2}\|\bar{\vect{x}} - \vect{x}\|^2 + \varphi(\bar{\vect{x}}) + \frac{1}{2}\|\bar{\vect{x}}\|^2 + \vect{y}^\top(\vect{x}-\bar{\vect{x}}) - \varphi(\vect{y}) \nonumber\\
&=& \frac{1}{2}\|\vect{x}\|^2 + {\bar{\vect{x}}}^\top(\bar{\vect{x}} - \vect{x}) + \vect{y}^\top(\vect{x}-\bar{\vect{x}}) - \varphi(\vect{y})+ \varphi(\bar{\vect{x}})  \nonumber\\
&=& \frac{1}{2}\|\vect{x}\|^2 + ({\bar{\vect{x}}} - \vect{y})^\top(\bar{\vect{x}} - \vect{x}) - \varphi(\vect{y})+ \varphi(\bar{\vect{x}}),\nonumber
\end{eqnarray}
from which \eqref{eq:bound1} follows.
\end{proof}

Now, take Lemma~\ref{lemma:bound1} and bound the left hand side as:
\begin{eqnarray}
(\bar{\vect{x}} - \vect{y})^\top (\bar{\vect{x}} - \vect{x}) &\ge& (\bar{\vect{x}} - \vect{y})^\top (\bar{\vect{x}} - \vect{x}) -
\frac{1}{2}\|\bar{\vect{x}} - \vect{x}\|^2 \nonumber\\
&=& (\bar{\vect{x}} - \vect{y})^\top (\bar{\vect{x}} - \vect{x}) - \frac{1}{2}\|\bar{\vect{x}}\|^2 - \frac{1}{2}\|\vect{x}\|^2 +
{\bar{\vect{x}}}^\top \vect{x} \nonumber\\
&=& \frac{1}{2}\|\bar{\vect{x}}\|^2 - \vect{y}^\top (\bar{\vect{x}} - \vect{x}) - \frac{1}{2}\|\vect{x}\|^2\nonumber\\
&=& \frac{1}{2}\|\vect{y} - \bar{\vect{x}}\|^2 - \frac{1}{2}\|\vect{y} - \vect{x}\|^2.\nonumber
\end{eqnarray}
This concludes the proof.

\section{Proof of Lemma~\ref{lemma:softprojgrad2}}\label{sec:proof_lemma_softprojgrad2}

Let $u(\bar{\vectsymb{\theta}},\vectsymb{\theta}) \triangleq \lambda R(\bar{\vectsymb{\theta}}) - \lambda R({\vectsymb{\theta}})$.
We have successively:
\begin{eqnarray}
\|\bar{\vectsymb{\theta}} - \vectsymb{\theta}_{t+1}\|^2
&\le^\text{(i)}& \|\bar{\vectsymb{\theta}} - \tilde{\vectsymb{\theta}}_{t+1}\|^2 \nonumber\\
&\le^\text{(ii)}& \|\bar{\vectsymb{\theta}} - \tilde{\vectsymb{\theta}}_t\|^2 +2\eta_t \lambda \sum_{j=1}^J (R_j(\bar{\vectsymb{\theta}}) - R_j({\tilde{\vectsymb{\theta}}_{t+j/J}})) \nonumber\\
&\le^\text{(iii)}& \|\bar{\vectsymb{\theta}} - \tilde{\vectsymb{\theta}}_t\|^2 +2\eta_t u(\bar{\vectsymb{\theta}},\tilde{\vectsymb{\theta}}_{t+1}) \nonumber\\
&\le^\text{(iv)}& \|\bar{\vectsymb{\theta}} - \tilde{\vectsymb{\theta}}_t\|^2 +2\eta_t u(\bar{\vectsymb{\theta}},\vectsymb{\theta}_{t+1}) \nonumber\\
&=& \|\bar{\vectsymb{\theta}} - \vectsymb{\theta}_t\|^2 + \|{\vectsymb{\theta}}_t - \tilde{\vectsymb{\theta}}_t\|^2 +
2(\bar{\vectsymb{\theta}} - \vectsymb{\theta}_t)^\top (\vectsymb{\theta}_t
- \tilde{\vectsymb{\theta}}_t) +2\eta_t u(\bar{\vectsymb{\theta}},\vectsymb{\theta}_{t+1}) \nonumber\\
&=& \|\bar{\vectsymb{\theta}} - \vectsymb{\theta}_t\|^2 + \eta_t^2 \|\vect{g}\|^2 + 2\eta_t(\bar{\vectsymb{\theta}} - \vectsymb{\theta}_t)^\top \vect{g}
+2\eta_t u(\bar{\vectsymb{\theta}},\vectsymb{\theta}_{t+1}) \nonumber\\
&\le^\text{(v)}& \|\bar{\vectsymb{\theta}} - \vectsymb{\theta}_t\|^2 + \eta_t^2 \|\vect{g}\|^2 +
2\eta_t (L(\bar{\vectsymb{\theta}}) - L(\vectsymb{\theta}_t)) +2\eta_t u(\bar{\vectsymb{\theta}},\vectsymb{\theta}_{t+1}) \nonumber\\
&\le& \|\bar{\vectsymb{\theta}} - \vectsymb{\theta}_t\|^2 + \eta_t^2 G^2 +
2\eta_t (L(\bar{\vectsymb{\theta}}) - L(\vectsymb{\theta}_t)) +2\eta_t u(\bar{\vectsymb{\theta}},\vectsymb{\theta}_{t+1}),
\end{eqnarray}
where the inequality (i) is due to the nonexpansiveness of the projection operator,
(ii) follows from applying Corollary~\ref{cor:bound2} $J$ times, (iii) follows from applying
the inequality $R_j(\tilde{\vectsymb{\theta}}_{t+l/J}) \ge R_j(\tilde{\vectsymb{\theta}}_{t+(l+1)/J})$
for $l = j,\ldots,J-1$,
(iv) results from the fact that $R(\tilde{\vectsymb{\theta}}_{t+1}) \ge R(\Pi_{\Theta}(\tilde{\vectsymb{\theta}}_{t+1}))$,
and (v) results from the subgradient inequality of convex functions, which has an extra term
$\frac{\sigma}{2}\|\bar{\vectsymb{\theta}} - \vectsymb{\theta}_t\|^2$ if $L$ is $\sigma$-strongly convex.

\section{Proof of Proposition~\ref{prop:softprojgrad2}}\label{sec:proof_prop_softprojgrad2}

Invoke Lemma~\ref{lemma:softprojgrad2} and sum for $t=1,\ldots,T$, which gives
\begin{eqnarray}
&&\sum_{t=1}^T \left( L(\vectsymb{\theta}_t;x_t,y_t) + \lambda R(\vectsymb{\theta}_t) \right)\nonumber\\
&=& \sum_{t=1}^T \left( L(\vectsymb{\theta}_t;x_t,y_t) + \lambda R(\vectsymb{\theta}_{t+1}) \right) -
\lambda (R(\vectsymb{\theta}_{m+1}) - R(\vectsymb{\theta}_1))\nonumber\\
&\le^\text{(i)}& \sum_{t=1}^T \left( L(\vectsymb{\theta}_t;x_t,y_t) + \lambda R(\vectsymb{\theta}_{t+1}) \right) \nonumber\\
&\le& \sum_{t=1}^T \left( L(\vectsymb{\theta}^*;x_t,y_t) + \lambda R(\vectsymb{\theta}^*) \right)  +
\frac{G^2 }{2} \sum_{t=1}^T \eta_t +
\sum_{t=1}^T\frac{\|\vectsymb{\theta}^* - \vectsymb{\theta}_t\|^2 - \|\vectsymb{\theta}^* - \vectsymb{\theta}_{t+1}\|^2}{2\eta_t}
\nonumber\\
&=& \sum_{t=1}^T \left( L(\vectsymb{\theta}^*;x_t,y_t) + \lambda R(\vectsymb{\theta}^*) \right)  +
\frac{G^2 }{2} \sum_{t=1}^T \eta_t +
\frac{1}{2}\sum_{t=2}^T\left( \frac{1}{\eta_{t}} - \frac{1}{\eta_{t-1}}\right) \cdot \|\vectsymb{\theta}^* - \vectsymb{\theta}_t\|^2
\nonumber\\
&& + \frac{1}{2\eta_{1}} \cdot \|\vectsymb{\theta}^* - \vectsymb{\theta}_1\|^2
- \frac{1}{2\eta_{T}} \cdot \|\vectsymb{\theta}^* - \vectsymb{\theta}_{T+1}\|^2
\end{eqnarray}
where the inequality (i) is due to the fact that $\vectsymb{\theta}_{1}=\vect{0}$.
Noting that the third term vanishes for a constant learning rate
and that the last term is non-positive suffices to prove the first part.
For the second part, we continue as:
\begin{eqnarray}
&&\sum_{t=1}^T \left( L(\vectsymb{\theta}_t;x_t,y_t) + \lambda R(\vectsymb{\theta}_t) \right)\nonumber\\
&\le & \sum_{t=1}^T \left( L(\vectsymb{\theta}^*;x_t,y_t) + \lambda R(\vectsymb{\theta}^*) \right)  +
\frac{G^2 }{2} \sum_{t=1}^T \eta_t +
\frac{F^2}{2}\sum_{t=2}^T\left( \frac{1}{\eta_{t}} - \frac{1}{\eta_{t-1}}\right)
+ \frac{F^2}{2\eta_{1}} \nonumber\\
&=& \sum_{t=1}^T \left( L(\vectsymb{\theta}^*;x_t,y_t) + \lambda R(\vectsymb{\theta}^*) \right)  +
\frac{G^2 }{2} \sum_{t=1}^T \eta_t
+ \frac{F^2}{2\eta_{T}} \nonumber\\
&\le^\text{(ii)}& \sum_{t=1}^T \left( L(\vectsymb{\theta}^*;x_t,y_t) + \lambda R(\vectsymb{\theta}^*) \right)  +
G^2 \eta_0 (\sqrt{T} - 1/2)
+ \frac{F^2 \sqrt{T}}{2\eta_0}\nonumber\\
&\le& \sum_{t=1}^T \left( L(\vectsymb{\theta}^*;x_t,y_t) + \lambda R(\vectsymb{\theta}^*) \right)  +
\left( G^2 \eta_0 
+ \frac{F^2 }{2\eta_0} \right) \sqrt{T},
\end{eqnarray}
where equality (ii) is due to the
fact that $\sum_{t=1}^T \frac{1}{\sqrt{t}} \le 2\sqrt{T} - 1$.
For the third part, continue after inequality (i) as:
\begin{eqnarray}
&&\sum_{t=1}^T \left( L(\vectsymb{\theta}_t;x_t,y_t) + \lambda R(\vectsymb{\theta}_t) \right)\nonumber\\
&\le& \sum_{t=1}^T \left( L(\vectsymb{\theta}^*;x_t,y_t) + \lambda R(\vectsymb{\theta}^*) \right)  +
\frac{G^2 }{2} \sum_{t=1}^T \eta_t +
\frac{1}{2}\sum_{t=2}^T\left( \frac{1}{\eta_{t}} - \frac{1}{\eta_{t-1}} - \sigma \right) \cdot \|\vectsymb{\theta}^* - \vectsymb{\theta}_t\|^2
\nonumber\\
&& + \frac{1}{2}\left(\frac{1}{\eta_{1}} - \sigma \right) \cdot \|\vectsymb{\theta}^* - \vectsymb{\theta}_1\|^2
- \frac{1}{2\eta_{T}} \cdot \|\vectsymb{\theta}^* - \vectsymb{\theta}_{T+1}\|^2
\nonumber\\
&=& \sum_{t=1}^T \left( L(\vectsymb{\theta}^*;x_t,y_t) + \lambda R(\vectsymb{\theta}^*) \right)  +
\frac{G^2 }{2 \sigma } \sum_{t=1}^T \frac{1}{t} - \frac{\sigma T}{2} \cdot \|\vectsymb{\theta}^* - \vectsymb{\theta}_{T+1}\|^2
\nonumber\\
&\le& \sum_{t=1}^T \left( L(\vectsymb{\theta}^*;x_t,y_t) + \lambda R(\vectsymb{\theta}^*) \right)  +
\frac{G^2 }{2 \sigma } \sum_{t=1}^T \frac{1}{t}
\nonumber\\
&\le^\text{(iii)}& \sum_{t=1}^T \left( L(\vectsymb{\theta}^*;x_t,y_t) + \lambda R(\vectsymb{\theta}^*) \right)  +
\frac{G^2 }{2 \sigma } (1 + \log T),
\end{eqnarray}
where the equality (iii) is due to the
fact that $\sum_{t=1}^T \frac{1}{t} \le 1 + \log T$.

\section{Lipschitz Constants of Some Loss Functions}\label{sec:lipschitz}

Let $\vectsymb{\theta}^*$ be a solution of
the problem \eqref{eq:learningproblem} with $\Theta = \set{R}^d$.
For certain loss functions, we may obtain bounds of the form $\|\vectsymb{\theta}^*\| \le \gamma$ for some $\gamma > 0$,
as the next proposition illustrates. Therefore, we may redefine
$\Theta = \{\vectsymb{\theta} \in \set{R}^d \,\,|\,\, \|\vectsymb{\theta}\| \le \gamma\}$ (a vacuous constraint)
without affecting the solution of \eqref{eq:learningproblem}.
\begin{proposition}
Let $R(\vectsymb{\theta}) = \frac{1}{2}(\sum_{k=1}^p\|\vectsymb{\theta}_k\|)^2$.
Let $L_{\mathrm{SVM}}$ and $L_{\mathrm{CRF}}$ be the structured hinge and logistic losses \eqref{eq:LSVM}.
Assume that the average cost function (in the SVM case) or the average entropy (in the CRF case) are bounded by some
$\Lambda \ge 0$, i.e.,%
\footnote{In sequence binary labeling, we have $\Lambda = \bar{N}$ for the CRF case and for the SVM case with a Hamming cost function,
where $\bar{N}$ is the average sequence length. Observe that the entropy of a distribution over labelings of a sequence of length $N$
is upper bounded by $\log 2^N = N$.} %
\begin{eqnarray}
\frac{1}{m}\sum_{i=1}^m \max_{y_i'\in  \in \sett{Y}(x_t)} \ell(y_i';y_i) \le \Lambda \quad \text{or} \quad
\frac{1}{m}\sum_{i=1}^m H(Y_i) \le \Lambda.
\end{eqnarray}
Then:
\begin{enumerate}
\item The solution of \eqref{eq:learningproblem} with $\Theta = \set{R}^d$
satisfies $\|\vectsymb{\theta}^*\| \le \sqrt{2\Lambda/\lambda}$.
\item $L$ is $G$-Lipschitz on $\set{R}^d$, with
$G = 2\max_{u\in\sett{U}} \|\vectsymb{\phi}(u)\|$.
\item Consider the following problem obtained from \eqref{eq:learningproblem} by adding a quadratic term:
\begin{equation}
\min_{\vectsymb{\theta}} \frac{\sigma}{2}\|\vectsymb{\theta}\|^2 + \lambda R(\vectsymb{\theta}) + \frac{1}{m} \sum_{i=1}^m L(\vectsymb{\theta};x_i,y_i).
\end{equation}
The solution of this problem satisfies $\|\vectsymb{\theta}^*\| \le
\sqrt{2\Lambda/(\lambda + \sigma)}$.
\item The modified loss $\tilde{L} = L + \frac{\sigma}{2}\|.\|^2$ is $\tilde{G}$-Lipschitz on
$\left\{\vectsymb{\theta} \,\, | \,\, \|\vectsymb{\theta}\| \le \sqrt{2\Lambda/(\lambda + \sigma)}\right\}$,
where $\tilde{G} = G + \sqrt{2\sigma^2 \Lambda / (\lambda + \sigma)}$.
\end{enumerate}
\end{proposition}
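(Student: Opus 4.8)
The plan is to lean on two elementary facts throughout: (a) the optimality of $\vectsymb{\theta}^*$ lets us compare the objective value at $\vectsymb{\theta}^*$ with the value at the trivial point $\vect{0}$, and (b) the norm inequality $\|\vectsymb{\theta}\|_{2,1} = \sum_k \|\vectsymb{\theta}_k\| \ge (\sum_k \|\vectsymb{\theta}_k\|^2)^{1/2} = \|\vectsymb{\theta}\|$, which yields $R(\vectsymb{\theta}) = \frac{1}{2}\|\vectsymb{\theta}\|_{2,1}^2 \ge \frac{1}{2}\|\vectsymb{\theta}\|^2$. Parts 1 and 3 are norm bounds that follow from these facts; Parts 2 and 4 are subgradient (Lipschitz) bounds.

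For \textbf{Part 1}, I would first evaluate the objective at $\vectsymb{\theta}=\vect{0}$. Here $R(\vect{0})=0$, and since the compatibility function $f$ vanishes identically at the zero parameter, the SVM loss collapses to $\max_{y'}\ell(y',y_i)$ while the CRF loss collapses to $\log\sum_{y'}\exp(0) = \log|\sett{Y}(x_i)|$. The hypothesis then bounds the average loss at $\vect{0}$ by $\Lambda$ in both cases, so the whole objective at $\vect{0}$ is at most $\Lambda$. By optimality the same bound holds at $\vectsymb{\theta}^*$; discarding the nonnegative loss term leaves $\lambda R(\vectsymb{\theta}^*)\le\Lambda$, and fact (b) converts this into $\tfrac{\lambda}{2}\|\vectsymb{\theta}^*\|^2\le\Lambda$, i.e. $\|\vectsymb{\theta}^*\|\le\sqrt{2\Lambda/\lambda}$. \textbf{Part 3} is the same argument, except that the added $\tfrac{\sigma}{2}\|\vectsymb{\theta}\|^2$ term is retained at the optimum, giving $\tfrac{\lambda+\sigma}{2}\|\vectsymb{\theta}^*\|^2\le\Lambda$ and hence $\|\vectsymb{\theta}^*\|\le\sqrt{2\Lambda/(\lambda+\sigma)}$.

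For \textbf{Part 2}, I would use that a convex function is $G$-Lipschitz exactly when all its subgradients have norm at most $G$. A subgradient of $L_{\mathrm{SVM}}$ at $\vectsymb{\theta}$ is $\vectsymb{\phi}(x,\hat{y})-\vectsymb{\phi}(x,y)$, where $\hat{y}$ is the loss-augmented maximizer, while the gradient of $L_{\mathrm{CRF}}$ is $\mathbb{E}_{y'}[\vectsymb{\phi}(x,y')]-\vectsymb{\phi}(x,y)$. In both cases the triangle inequality (with Jensen's inequality for the CRF expectation, using convexity of the norm) bounds the magnitude by $2\max_{u\in\sett{U}}\|\vectsymb{\phi}(u)\|=G$. \textbf{Part 4} then drops out immediately: the subgradient of $\tilde{L}=L+\tfrac{\sigma}{2}\|\cdot\|^2$ is a subgradient of $L$ plus $\sigma\vectsymb{\theta}$, so on the ball of radius $r=\sqrt{2\Lambda/(\lambda+\sigma)}$ its norm is at most $G+\sigma r = G+\sqrt{2\sigma^2\Lambda/(\lambda+\sigma)}=\tilde{G}$.

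The only genuinely subtle point, and the one I would state carefully, is the interpretation of the CRF side of the hypothesis: one must recognize that $H(Y_i)$ is the entropy of the model distribution at $\vectsymb{\theta}=\vect{0}$, namely the uniform distribution over $\sett{Y}(x_i)$, so that it coincides exactly with $L_{\mathrm{CRF}}(\vect{0};x_i,y_i)=\log|\sett{Y}(x_i)|$ and the bound $\tfrac{1}{m}\sum_i H(Y_i)\le\Lambda$ applies verbatim. Once that identification is made, the remaining manipulations are routine.
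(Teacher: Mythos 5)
Your proposal is correct and follows essentially the same route as the paper: comparing the objective at $\vectsymb{\theta}^*$ with its value at $\vect{0}$ (where the SVM and CRF losses reduce to the average cost and $\log|\sett{Y}(x_i)|$ respectively), using $(\sum_k\|\vectsymb{\theta}_k\|)^2\ge\sum_k\|\vectsymb{\theta}_k\|^2$ for the norm bounds, and bounding subgradient norms (with Jensen's inequality for the CRF gradient) for the Lipschitz claims. Your explicit treatment of Parts 3 and 4, which the paper dispatches with ``the same rationale,'' is the natural completion and matches the stated constants.
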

\begin{proof}
Let $F_{\mathrm{SVM}}(\vectsymb{\theta})$ and $F_{\mathrm{CRF}}(\vectsymb{\theta})$ be the objectives of \eqref{eq:learningproblem} for
the SVM and CRF cases. We have
\begin{eqnarray}
F_{\mathrm{SVM}}(\vect{0}) &=&
\lambda R(\vect{0}) + \frac{1}{m}\sum_{i=1}^m L_{\mathrm{SVM}}(\vect{0}; x_i, y_i) = \frac{1}{m}\sum_{i=1}^m \max_{y_i' \in \sett{Y}(x_i)} \ell(y_i';y_i)
\le \Lambda_{\mathrm{SVM}}\\
F_{\mathrm{CRF}}(\vect{0}) &=& \lambda R(\vect{0}) + \frac{1}{m}\sum_{i=1}^m L_{\mathrm{CRF}}(\vect{0}; x_i, y_i) = \frac{1}{m}\sum_{i=1}^m \log |\sett{Y}(x_i)|
\le \Lambda_{\mathrm{CRF}}
\end{eqnarray}
Using the facts that $F(\vectsymb{\theta}^*) \le F(\vect{0})$, that the losses are non-negative,
and that $(\sum_i |x_i|)^2 \ge \sum_i x_i^2$, we obtain
$\frac{\lambda}{2}\|\vectsymb{\theta}^*\|^2 \le \lambda R(\vectsymb{\theta}^*) \le F(\vectsymb{\theta}^*) \le F(\vect{0})$,
which proves the first statement.

To prove the second statement for the SVM case, note that 
a subgradient of $L_{\mathrm{SVM}}$ at $\vectsymb{\theta}$ is 
$\vect{g}_{\mathrm{SVM}} = \vectsymb{\phi}(x,\hat{y}) - \vectsymb{\phi}(x,y)$,
where $\hat{y} = \arg\max_{y' \in \sett{Y}(x)} \vectsymb{\theta}^\top (\vectsymb{\phi}(x,y') - \vectsymb{\phi}(x,y)) + \ell(y';y)$; 
and that the gradient of $L_{\mathrm{CRF}}$ at $\vectsymb{\theta}$ is 
$\vect{g}_{\mathrm{CRF}} = \mathbb{E}_{\vectsymb{\theta}} \vectsymb{\phi}(x,Y) - \vectsymb{\phi}(x,y)$.
Applying Jensen's inequality, we have that
$\|\vect{g}_{\mathrm{CRF}}\| \le \mathbb{E}_{\vectsymb{\theta}} \|\vectsymb{\phi}(x,Y) - \vectsymb{\phi}(x,y)\|$.
Therefore, both $\|\vect{g}_{\mathrm{SVM}}\|$ and $\|\vect{g}_{\mathrm{CRF}}\|$ are
upper bounded by $\max_{x\in\sett{X},y,y'\in\sett{Y}(x)} \|\vectsymb{\phi}(x,y') - \vectsymb{\phi}(x,y)\| \le 2\max_{u\in\sett{U}} \|\vectsymb{\phi}(u)\|$.

The same rationale can be used to prove the third and fourth statements.
\end{proof}

\section{Computing the proximity operator of the (non-separable) squared $\ell_1$}\label{sec:proximoper_sql1}

We present an algorithm (Alg.~\ref{alg:softproj2}) that computes the Moreau projection of the \emph{squared}, \emph{weighted} $\ell_1$-norm.
Denote by $\odot$ the Hadamard product, $[\vect{a} \odot \vect{b}]_k = a_k b_k$. 
Letting $\lambda,\vect{d} \ge 0$, and $\phi_{\vect{d}}(\vect{x}) \triangleq \frac{1}{2} \|\vect{d} \odot \vect{x}\|_1^2$,
the underlying optimization problem is:
\begin{equation}\label{eq:proximsquaredl1}
M_{\lambda \phi_{\vect{d}}}(\vect{x}_0) \triangleq \min_{\vect{x} \in \set{R}^p} \frac{1}{2}\|\vect{x} - \vect{x}_0\|^2 + \frac{\lambda}{2} \left( \sum_{i=1}^p d_i |x_i| \right)^2.
\end{equation}
This includes the squared $\ell_1$-norm as a particular case, when $\vect{d} = \vect{1}$ (the case addressed in Alg.~\ref{alg:softproj}).
The proof is somewhat technical and follows the same procedure employed by \citet{Duchi2008} to derive an algorithm for
projecting onto the $\ell_1$-ball. The runtime is $O(p\log p)$ (the amount of time
that is necessary to sort the vector), but a similar trick as the one described in \citep{Duchi2008}
can be employed to yield $O(p)$ runtime.

\begin{algorithm}[t]
   \caption{Moreau projection for the squared weighted $\ell_1$-norm}
\begin{algorithmic}\label{alg:softproj2}
   \STATE {\bfseries Input:} A vector $\vect{x}_0 \in \set{R}^p$, a weight vector $\vect{d} \ge 0$, and a parameter $\lambda > 0$
   \STATE Set $u_{0r} = |x_{0r}|/d_r$ and $a_r = d_r^2$ for each $r=1,\ldots,p$
   \STATE Sort $\vect{u}_0$: $u_{0(1)} \ge \ldots \ge u_{0(p)}$
   \STATE Find $\rho = \max \left\{j \in \{1,\ldots,p\} \,\,|\,\, u_{0(j)} - \frac{\lambda}{1 + \lambda \sum_{r=1}^j a_{(r)}} \sum_{r=1}^j a_{(r)} u_{0(r)} > 0\right\}$
   \STATE Compute $\vect{u} = \mathrm{soft}(\vect{u}_0, \tau)$, where $\tau = \frac{\lambda}{1 + \lambda \sum_{r=1}^\rho a_{(r)}} \sum_{r=1}^{\rho} a_{(r)} u_{0(r)}$
   \STATE {\bfseries Output:} $\vect{x}$ s.t. $x_r = \mathrm{sign}(x_{0r}) d_r u_r$.
\end{algorithmic}
\end{algorithm}

\begin{lemma}\label{lemma:longproof1}
Let $\vect{x}^* = \mathrm{prox}_{\lambda \phi_{\vect{d}}}(\vect{x}_0)$ be
the solution of \eqref{eq:proximsquaredl1}. Then:
\begin{enumerate}
\item $\vect{x}^*$ agrees in sign with $\vect{x}_0$, \emph{i.e.},
each component satisfies $x_{0i}\cdot x_i^* \ge 0$.
\item Let $\vectsymb{\sigma} \in \{-1,1\}^p$. Then
$\mathrm{prox}_{\lambda \phi_{\vect{d}}}(\vectsymb{\sigma} \odot \vect{x}_0) = \vectsymb{\sigma} \odot \mathrm{prox}_{\lambda \phi_{\vect{d}}}(\vect{x}_0)$,
\emph{i.e.}, flipping a sign in $\vect{x}_0$ produces a $\vect{x}^*$ with the same sign flipped.
\end{enumerate}
\end{lemma}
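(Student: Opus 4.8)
The plan is to derive both statements from a single structural fact: the regularizer $\phi_{\vect{d}}(\vect{x}) = \frac{1}{2}\|\vect{d}\odot\vect{x}\|_1^2$ depends on $\vect{x}$ only through the vector of absolute values $(|x_1|,\ldots,|x_p|)$, and is therefore invariant under any componentwise sign flip. First I would record that the objective in \eqref{eq:proximsquaredl1} is strictly convex, being the sum of the strictly convex quadratic $\frac{1}{2}\|\vect{x}-\vect{x}_0\|^2$ and the convex term $\lambda\phi_{\vect{d}}$; hence the minimizer $\vect{x}^*$ is unique, which is what lets me argue by contradiction.

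For the first claim I would use an exchange (sign-flip) argument. Suppose some coordinate violated sign agreement, i.e.\ $x_{0i}\cdot x_i^* < 0$, so both are nonzero with opposite signs. Form $\vect{x}'$ by replacing $x_i^*$ with $-x_i^*$ and leaving every other coordinate fixed. By the invariance noted above, $\phi_{\vect{d}}(\vect{x}') = \phi_{\vect{d}}(\vect{x}^*)$, while the quadratic term changes only in coordinate $i$: writing $|x_i^*| = a$ and $|x_{0i}| = b$ with $a,b>0$, the flip turns $(a+b)^2$ into $(a-b)^2$, a strict decrease. This contradicts optimality of the unique minimizer $\vect{x}^*$, so no such coordinate exists and $x_{0i}\cdot x_i^* \ge 0$ for all $i$.

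For the second claim I would apply the change of variables $\vect{x} = \vectsymb{\sigma}\odot\vect{z}$ in the minimization defining $\mathrm{prox}_{\lambda\phi_{\vect{d}}}(\vectsymb{\sigma}\odot\vect{x}_0)$. Since $\sigma_i^2 = 1$, I obtain $\|\vectsymb{\sigma}\odot\vect{z} - \vectsymb{\sigma}\odot\vect{x}_0\|^2 = \|\vect{z}-\vect{x}_0\|^2$ and $|\sigma_i z_i| = |z_i|$, so $\phi_{\vect{d}}(\vectsymb{\sigma}\odot\vect{z}) = \phi_{\vect{d}}(\vect{z})$; the objective for input $\vectsymb{\sigma}\odot\vect{x}_0$ evaluated at $\vectsymb{\sigma}\odot\vect{z}$ equals the objective for input $\vect{x}_0$ evaluated at $\vect{z}$. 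Because $\vect{z}\mapsto\vectsymb{\sigma}\odot\vect{z}$ is a bijection of $\set{R}^p$, the minimizers correspond, giving $\mathrm{prox}_{\lambda\phi_{\vect{d}}}(\vectsymb{\sigma}\odot\vect{x}_0) = \vectsymb{\sigma}\odot\mathrm{prox}_{\lambda\phi_{\vect{d}}}(\vect{x}_0)$.

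Neither step poses a genuine obstacle; the only point requiring care is the strict-versus-nonstrict bookkeeping in the first claim. The flip produces a strict decrease precisely when $a,b>0$, which is exactly the excluded case $x_{0i}\cdot x_i^* < 0$, so boundary situations (a zero coordinate in $\vect{x}^*$ or in $\vect{x}_0$) are automatically consistent with the weak inequality $x_{0i}\cdot x_i^* \ge 0$ and need no separate treatment. I would also remark that the second claim is essentially the global manifestation of the same invariance, and could alternatively be phrased as equivariance of the proximity operator under the sign-flip isometries that preserve $\phi_{\vect{d}}$.
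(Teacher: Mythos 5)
Your proof is correct and follows essentially the same route as the paper: the identical sign-flip exchange argument for part 1 (the paper computes the same strict decrease $(x_i - x_{0i})^2 < (x_i^* - x_{0i})^2$ with $\phi_{\vect{d}}$ unchanged), and invariance of the objective under componentwise sign flips for part 2. Your change-of-variables formulation of part 2 is in fact a bit more explicit and self-contained than the paper's one-line remark, but it is the same underlying idea.
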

\begin{proof}
Suppose that $x_{0i}\cdot x_i^* < 0$ for some $i$.
Then, $\vect{x}$ defined by $x_j = x_{j}^*$ for $j \ne i$ and $x_i = -x_{i}^*$
achieves a lower objective value than $\vect{x}^*$,
since $\phi_{\vect{d}}(\vect{x}) = \phi_{\vect{d}}(\vect{x}^*)$
and $(x_i - x_{0i})^2 < (x_i^* - x_{0i})^2$; this contradicts the optimality of $\vect{x}^*$.
The second statement is a simple consequence of the first one and that
$\phi_{\vect{d},\lambda}(\vectsymb{\sigma} \odot  \vect{x}) = \phi_{\vect{d},\lambda}(\vectsymb{\sigma} \odot  \vect{x}^*)$.
\end{proof}

\smallskip

Lemma~\ref{lemma:longproof1} enables reducing the problem to the non-negative orthant,
by writing $\vect{x}_0 = \vectsymb{\sigma} \cdot \tilde{\vect{x}}_0$, with $\tilde{\vect{x}}_0 \ge \vect{0}$,
obtaining a solution $\tilde{\vect{x}}^*$ and then recovering the true solution as $\vect{x}^* = \vectsymb{\sigma} \cdot \tilde{\vect{x}}^*$.
It therefore suffices to solve \eqref{eq:proximsquaredl1} with the constraint $\vect{x} \ge \vect{0}$,
which in turn can be transformed into:
\begin{eqnarray}\label{eq:proximsquaredl1posorth}
\min_{\vect{u} \ge \vect{0}} F(\vect{u}) \triangleq \frac{1}{2}\sum_{r=1}^p a_r (u_r - u_{0r})^2 + \frac{\lambda}{2} \left( \sum_{r=1}^p a_r u_r \right)^2,
\end{eqnarray}
where we made the change of variables $a_i \triangleq d_i^2$, $u_{0i} \triangleq x_{0i}/d_i$ and $u_i \triangleq x_i/d_i$.

The Lagrangian of \eqref{eq:proximsquaredl1posorth} is
$\mathcal{L}(\vect{u},\vectsymb{\xi}) = \frac{1}{2}\sum_{r=1}^p a_r (u_r - u_{0r})^2 + \frac{\lambda}{2} \left( \sum_{r=1}^p a_r u_r \right)^2 - \vectsymb{\xi}^{\top} \vect{u}$,
where $\vectsymb{\xi} \ge \vect{0}$ are Lagrange multipliers. Equating the gradient (w.r.t. $\vect{u}$) to zero gives
\begin{equation}\label{eq:softproj_complslack}
\vect{a} \odot (\vect{u} - \vect{u}_0) + \lambda \sum_{r=1}^p a_r u_r \vect{a} - \vectsymb{\xi} = \vect{0}.
\end{equation}
From the complementary slackness condition, $u_j > 0$ implies $\xi_j = 0$, which in turn implies
\begin{equation}\label{eq:softproj_complslack2}
a_j (u_j - u_{0j}) + \lambda a_j \sum_{r=1}^p a_r u_r  = 0. 
\end{equation}
Thus, if $u_j > 0$, the solution is of the form $u_j = u_{0j} - \tau$, with $\tau = \lambda \sum_{r=1}^p a_r u_r$.
The next lemma shows the existence of a split point below which some coordinates vanish.

\begin{lemma}
Let $\vect{u}^*$ be the solution of \eqref{eq:proximsquaredl1posorth}.
If $u_k^*=0$ and $u_{0j} < u_{0k}$, then we must have $u_j^*=0$.
\end{lemma}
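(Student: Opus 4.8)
The plan is to read everything off the KKT conditions already established in \eqref{eq:softproj_complslack}--\eqref{eq:softproj_complslack2}, exploiting the fact that the scalar $\tau \triangleq \lambda \sum_{r=1}^p a_r u_r^*$ is \emph{common} to all coordinates and therefore couples them. Since the change of variables $u_{0i} = x_{0i}/d_i$ presupposes positive weights, we have $a_j = d_j^2 > 0$ throughout, so no coordinate is degenerate and we may freely divide by $a_j$.

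First I would extract two consequences from the stationarity condition \eqref{eq:softproj_complslack}. Writing out its $j$-th component gives $\xi_j = a_j\,(u_j^* - u_{0j} + \tau)$; since $\xi_j \ge 0$ and $a_j > 0$, this yields the one-sided bound $u_j^* \ge u_{0j} - \tau$, valid for \emph{every} index $j$. In addition, complementary slackness ($u_j^* > 0 \Rightarrow \xi_j = 0$) sharpens this to the exact identity $u_j^* = u_{0j} - \tau$ on the support of $\vect{u}^*$, which is precisely \eqref{eq:softproj_complslack2}.

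Next I would apply the one-sided bound to the vanishing coordinate $k$. Because $u_k^* = 0$, the relation $\xi_k = a_k(-u_{0k} + \tau) \ge 0$ forces $\tau \ge u_{0k}$. This is the only place where the hypothesis $u_k^* = 0$ enters, and its role is to pin the common threshold $\tau$ from below by $u_{0k}$.

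Finally I would conclude by contradiction. Suppose $u_j^* > 0$. Then complementary slackness gives $u_j^* = u_{0j} - \tau$; but the hypothesis $u_{0j} < u_{0k}$ together with $\tau \ge u_{0k}$ yields $u_{0j} - \tau < u_{0k} - \tau \le 0$, so $u_j^* < 0$, contradicting $u_j^* > 0$. Hence $u_j^* = 0$, as claimed. I do not expect a genuine obstacle here: the entire argument rests on the single scalar $\tau$ being shared across all coordinates, which converts the componentwise KKT system into the desired monotone (``water-filling'') behaviour, where a coordinate with a smaller $u_{0}$ value cannot survive once a larger one has been thresholded to zero.
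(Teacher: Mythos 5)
Your proof is correct, but it takes a genuinely different route from the paper's. The paper argues variationally: assuming $u_j^*=\epsilon>0$, it constructs an explicit competitor $\tilde{\vect{u}}$ that shifts mass from coordinate $j$ to coordinate $k$ in a way that keeps $\sum_r a_r u_r$ (and hence the coupling term $\frac{\lambda}{2}(\sum_r a_r u_r)^2$) unchanged, and then shows the separable quadratic part strictly decreases because $u_{0j}<u_{0k}$, contradicting optimality of $\vect{u}^*$. You instead read the conclusion directly off the KKT system: stationarity \eqref{eq:softproj_complslack} plus $\xi_k\ge 0$ at the vanishing coordinate pins the common threshold from below, $\tau\ge u_{0k}$, while complementary slackness \eqref{eq:softproj_complslack2} at any active coordinate forces $u_j^*=u_{0j}-\tau$, which is then negative --- a contradiction. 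Your argument is legitimate because \eqref{eq:proximsquaredl1posorth} is a convex program with affine constraints, so the KKT conditions are necessary at the optimum; and, like the paper's own construction (which divides by $a_j$ and $a_k$), it needs $a_r=d_r^2>0$, which you correctly flag. What each approach buys: the exchange argument is self-contained and makes no appeal to duality or multipliers, whereas your KKT route is shorter, handles the boundary case $u_{0j}=u_{0k}$ for free, and essentially already delivers the full soft-threshold characterization $u^*_{(r)}=\max\{0,\,u_{0(r)}-\tau\}$ that the paper only assembles in the subsequent displays \eqref{eq:softproj_complslack3}--\eqref{eq:softproj_complslack4}.
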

\begin{proof}
Suppose that
$u_j^*=\epsilon>0$. We will construct a $\tilde{\vect{u}}$ whose objective value is lower than $F(\vect{u}^*)$,
which contradicts the optimality of $\vect{u}^*$: set $\tilde{u}_l = u_l^*$ for $l \notin \{j,k\}$,
$\tilde{u}_k = \epsilon c$, and $\tilde{u}_j = \epsilon \left(1 - c a_k/a_j\right)$, where $c = \min\{a_j/a_k, 1\}$.
We have $\sum_{r=1}^p a_r u_r^* = \sum_{r=1}^p a_r \tilde{u}_r$, and therefore
\begin{eqnarray}\label{eq:lemmasquaredl1_1}
2(F(\tilde{\vect{u}}) - F(\vect{u}^*)) &=& \sum_{r=1}^p a_r (\tilde{u}_r - u_{0r})^2 - \sum_{r=1}^p a_r (u_r^* - u_{0r})^2\nonumber\\
&=& a_j (\tilde{u}_j - u_{0j})^2 - a_j (u_j^* - u_{0j})^2
+ a_k (\tilde{u}_k - u_{0k})^2 - a_k (u_k^* - u_{0k})^2.
\end{eqnarray}
Consider the following two cases: (i) if $a_j \le a_k$, then
$\tilde{u}_k = \epsilon a_j/a_k$ and $\tilde{u}_j = 0$. Substituting in \eqref{eq:lemmasquaredl1_1},
we obtain
$2(F(\tilde{\vect{u}}) - F(\vect{u}^*)) = \epsilon^2 \left( a_j^2/a_k - a_j\right) \le 0$, which leads
to the contradiction $F(\tilde{\vect{u}}) \le F(\vect{u}^*)$. If (ii) $a_j > a_k$, then
$\tilde{u}_k = \epsilon$ and $\tilde{u}_j = \epsilon \left(1 - a_k/a_j\right)$.
Substituting in \eqref{eq:lemmasquaredl1_1},
we obtain
$2(F(\tilde{\vect{u}}) - F(\vect{u}^*)) = a_j \epsilon^2 \left( 1 - a_k/a_j\right)^2
+ 2a_k \epsilon u_{0j} - 2a_k \epsilon u_{0k} + a_k \epsilon^2 - a_j \epsilon^2
< a_k^2/a_j \epsilon^2 - 2a_k \epsilon^2 + a_k \epsilon^2 =
\epsilon^2 \left( a_k^2/a_j - a_k\right) < 0$, which also leads to a contradiction.
\end{proof}

\bigskip

Let $u_{0(1)}\ge \ldots \ge u_{0(p)}$ be the entries of $\vect{u}_0$ sorted in decreasing order,
and let $u^*_{(1)},\ldots,u^*_{(p)}$ be the entries
of $\vect{u}^*$ under the same permutation. Let $\rho$ be the number of nonzero entries in $\vect{u}^*$ , i.e.,
$u^*_{(\rho)} > 0$, and, if $\rho < p$, $u^*_{(\rho+1)} = 0$.
Summing \eqref{eq:softproj_complslack2} for $(j) = 1,\ldots, \rho$, we get
\begin{equation}\label{eq:softproj_complslack3}
\sum_{r=1}^{\rho} a_{(r)} u^*_{(r)} - \sum_{r=1}^{\rho} a_{(r)} u_{0(r)} +
\left(\sum_{r=1}^{\rho} a_{(r)} \right)\lambda \sum_{r=1}^{\rho} a_{(r)} u^*_{(r)} = 0,
\end{equation}
which implies
\begin{equation}\label{eq:softproj_complslack4}
\sum_{r=1}^{p} u^*_{r} = \sum_{r=1}^{\rho} u^*_{(r)} = \frac{1}{1+\lambda \sum_{r=1}^{\rho} a_{(r)}}\sum_{r=1}^{\rho} a_{(r)} u_{0(r)},
\end{equation}
and therefore $\tau = \frac{\lambda}{1+\lambda \sum_{r=1}^{\rho} a_{(r)}}\sum_{r=1}^{\rho} a_{(r)} u_{0(r)}$.
The complementary slackness conditions for $r = \rho$ and $r = \rho+1$ imply
\begin{eqnarray}
u^*_{(\rho)} - u_{0(\rho)} + \lambda \sum_{r=1}^{\rho} a_{(r)} u^*_{(r)} = 0 \quad \text{and} \quad
- u^*_{0(\rho+1)} + \lambda \sum_{r=1}^{\rho} a_{(r)} u^*_{(r)} = \xi_{(\rho+1)}\ge 0;
\end{eqnarray}
therefore $u_{0(\rho)} > u_{0(\rho)} - u^*_{(\rho)} = \tau \ge u_{0(\rho+1)}$.
This implies that $\rho$ is such that
\begin{equation}
u_{0(\rho)} > \frac{\lambda}{1 + \lambda \sum_{r=1}^{\rho} a_{(r)}} \sum_{r=1}^\rho a_{(r)} u_{0(r)} \ge u_{0(\rho+1)}.
\end{equation}
The next proposition goes farther by exactly determining $\rho$.

\begin{proposition}
The quantity $\rho$ can be determined via:
\begin{equation}
\rho = \max \left\{j \in [p] \,\,\Big|\,\, u_{0(j)} - \frac{\lambda}{1 + \lambda \sum_{r=1}^{j} a_{(r)}}  \sum_{r=1}^j a_{(r)} u_{0(r)} > 0\right\}.
\end{equation}
\end{proposition}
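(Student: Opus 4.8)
The plan is to recast the claim as a statement about the sign of the scalar function
\[
g(j) \triangleq u_{0(j)} - \frac{\lambda}{1 + \lambda \sum_{r=1}^{j} a_{(r)}} \sum_{r=1}^j a_{(r)} u_{0(r)},
\]
and to show that $\rho$ is exactly the last index at which $g$ is positive. Writing $S_j \triangleq \sum_{r=1}^j a_{(r)}$ and $\tau_j \triangleq \frac{\lambda}{1+\lambda S_j}\sum_{r=1}^j a_{(r)} u_{0(r)}$ (so that $g(j)=u_{0(j)}-\tau_j$), the preceding derivation has already supplied the two-sided inequality $u_{0(\rho)} > \tau_\rho \ge u_{0(\rho+1)}$. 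Its left half says $g(\rho)>0$, so $\rho$ belongs to the set whose maximum we are computing; it therefore remains only to prove that $g(j)\le 0$ for every $j>\rho$, which certifies that no larger index qualifies.

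The first key step is to derive the one-step update relating $\tau_{j+1}$ to $\tau_j$. Expanding both sides using $S_{j+1}=S_j+a_{(j+1)}$ and clearing denominators gives the identity
\[
\tau_{j+1} - u_{0(j+1)} = \left(\tau_j - u_{0(j+1)}\right)\frac{1+\lambda S_j}{1+\lambda S_{j+1}}.
\]
Because the multiplicative factor is strictly positive, $\tau_{j+1}-u_{0(j+1)}$ and $\tau_j-u_{0(j+1)}$ always share the same sign. This sign-preservation property is the crux of the argument: in particular, whenever $\tau_j \ge u_{0(j+1)}$ we immediately get $\tau_{j+1}\ge u_{0(j+1)}$.

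With this in hand I would prove by induction on $k$ that $\tau_k \ge u_{0(k+1)}$ for every $k$ with $\rho \le k \le p-1$. The base case $k=\rho$ is the right half of the inequality already derived. For the inductive step, assuming $\tau_k \ge u_{0(k+1)}$, the identity above yields $\tau_{k+1}\ge u_{0(k+1)}$; combining this with the sorted order $u_{0(k+1)}\ge u_{0(k+2)}$ gives $\tau_{k+1}\ge u_{0(k+2)}$, closing the induction. Unwinding indices, this establishes $\tau_{j-1}\ge u_{0(j)}$ for all $j>\rho$, and one further application of the sign identity (now yielding $\tau_j \ge u_{0(j)}$, i.e.\ $g(j)\le 0$) finishes the claim. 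Together with $g(\rho)>0$, this identifies $\rho$ as the claimed maximum.

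The main obstacle is that $\tau_j$ is \emph{not} monotone in $j$, so one cannot simply compare each $u_{0(j)}$ against a fixed threshold. The whole proof hinges instead on the sign-preservation identity, which lets the induction propagate the inequality $\tau_k \ge u_{0(k+1)}$ forward one index at a time, while the decreasing order of the $u_{0(r)}$ supplies exactly the slack needed to re-seed the hypothesis at each step.
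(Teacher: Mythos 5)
Your proof is correct, and it takes a genuinely different route from the paper's. The paper argues by contradiction from optimality: it sets $\rho^*$ to be the true number of nonzeros, observes that $\rho \ge \rho^*$ follows from membership, and then rules out $\rho > \rho^*$ by explicitly constructing competing feasible points (splitting into the cases $\tau^* \ge \tau$ and $\tau^* < \tau$, the latter requiring a rather delicate two-coordinate perturbation of $\vect{u}^*$) and showing each would strictly decrease the objective $F$. You instead work directly with the threshold sequence $\tau_j$: the one-step identity
\begin{equation*}
\tau_{j+1} - u_{0(j+1)} = \bigl(\tau_j - u_{0(j+1)}\bigr)\,\frac{1+\lambda S_j}{1+\lambda S_{j+1}}
\end{equation*}
checks out (expand $\tau_{j+1}$ using $\sum_{r=1}^{j} a_{(r)} u_{0(r)} = (1+\lambda S_j)\tau_j/\lambda$ and $S_{j+1}=S_j+a_{(j+1)}$), the factor is strictly positive, and your induction seeded by the KKT-derived inequality $\tau_\rho \ge u_{0(\rho+1)}$ correctly propagates $\tau_{j}\ge u_{0(j)}$ to every $j>\rho$, so no index beyond $\rho$ can belong to the set. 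You are also right to flag non-monotonicity of $\tau_j$ as the real obstacle; the sorted order of the $u_{0(r)}$ is exactly what re-seeds the induction. What each approach buys: the paper's argument leans only on optimality of $\vect{u}^*$ and generalizes to settings where no clean recurrence for the threshold exists, at the cost of an intricate case analysis; yours is shorter, purely algebraic once the KKT conditions are in hand, and closer in spirit to the prefix-sum arguments used for $\ell_1$-ball projection. One presentational caveat: your argument silently assumes $\rho<p$ when invoking $\tau_\rho \ge u_{0(\rho+1)}$; the case $\rho=p$ is trivial but should be stated.
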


\begin{proof}
Let $\rho^* = \max\{j | u^*_{(j)} > 0\}$. We have that
$u^*_{(r)} = u_{0(r)} - \tau^*$ for $r \le \rho^*$, where $\tau^* = \frac{\lambda}{1+\lambda \sum_{r=1}^{\rho^*} a_{(r)}}\sum_{r=1}^{\rho^*} a_{(r)} u_{0(r)}$,
and therefore $\rho \ge \rho^*$. We need to prove that $\rho \le \rho^*$, which we will do by contradiction.
Assume that $\rho > \rho^*$. Let $\vect{u}$ be the vector induced by the choice of $\rho$, i.e.,
$u_{(r)} = 0$ for $r > \rho$ and $u_{(r)} = u_{0(r)} - \tau$ for $r \le \rho$,
where $\tau = \frac{\lambda}{1+\lambda \sum_{r=1}^{\rho} a_{(r)}}\sum_{r=1}^{\rho} a_{(r)} u_{0(r)}$.
From the definition of $\rho$, we have $u_{(\rho)} = u_{0(\rho)} - \tau > 0$, which implies $u_{(r)} = u_{0(r)} - \tau > 0$
for each $r \le \rho$.
In addition,
\begin{eqnarray}
\sum_{r=1}^p a_r u_r &=& \sum_{r=1}^{\rho} a_{(r)} u_{0(r)} - \sum_{r=1}^{\rho} a_{(r)} \tau
= \left( 1 - \frac{\lambda \sum_{r=1}^{\rho} a_{(r)}}{1 + \lambda \sum_{r=1}^{\rho} a_{(r)}}\right) \sum_{r=1}^{\rho} a_{(r)} u_{0(r)}\nonumber\\
&=& \frac{1}{1 + \lambda \sum_{r=1}^{\rho} a_{(r)}} \sum_{r=1}^{\rho} a_{(r)} u_{0(r)}
= \frac{\tau}{\lambda},\label{eq:lemma_final_squaredl1_1} \\
\sum_{r=1}^p a_r (u_r - u_{0r})^2 &=& \sum_{r=1}^{\rho^*} a_{(r)} \tau^2 + \sum_{r=\rho^*+1}^{\rho} a_{(r)} \tau^2
+ \sum_{r=\rho+1}^{p} a_{(r)} u_{0(r)}^2 \nonumber\\
&<& \sum_{r=1}^{\rho^*} a_{(r)} \tau^2 + \sum_{r=\rho^*+1}^{p} a_{(r)} u_{0(r)}^2.\label{eq:lemma_final_squaredl1_2}
\end{eqnarray}
We next consider two cases:

\framebox{$\tau^* \ge \tau$.}
From \eqref{eq:lemma_final_squaredl1_2}, we have that
$\sum_{r=1}^p a_r (u_r - u_{0r})^2 < \sum_{r=1}^{\rho^*} a_{(r)} \tau^2 + \sum_{r=\rho^*+1}^{p} a_{(r)} u_{0(r)}^2
\le \sum_{r=1}^{\rho^*} a_{(r)} (\tau^*)^2 + \sum_{r=\rho^*+1}^{p} a_{(r)} u_{0(r)}^2
= \sum_{r=1}^p a_r (u_r^* - u_{0r})^2$.
From \eqref{eq:lemma_final_squaredl1_1}, we have that
$\left( \sum_{r=1}^p a_r u_r \right)^2 = \tau^2/\lambda^2 \le (\tau^*)^2/\lambda^2$.
Summing the two inequalities, we get $F(\vect{u}) < F(\vect{u}^*)$, which leads to a contradiction.

\framebox{$\tau^* < \tau$.}
We will construct a vector $\tilde{\vect{u}}$ from $\vect{u}^*$ and show that
$F(\tilde{\vect{u}}) < F(\vect{u}^*)$. Define
\begin{equation}
\tilde{u}_{(r)} = \left\{
\begin{array}{ll}
u^*_{(\rho^*)} - \frac{2 a_{(\rho^* + 1)}}{a_{(\rho^*)} + a_{(\rho^* + 1)}}\epsilon, & \text{if $r = \rho^*$}\\
\frac{2 a_{(\rho^*)}}{a_{(\rho^*)} + a_{(\rho^* + 1)}}\epsilon, & \text{if $r = \rho^* + 1$}\\
u^*_{(r)} & \text{otherwise,}
\end{array}
\right.
\end{equation}
where $\epsilon = (u_{0(\rho^* + 1)} - \tau^*)/2$.
Note that $\sum_{r=1}^p a_r \tilde{u}_r = \sum_{r=1}^p a_r u^*_r$.
From the assumptions that $\tau^* < \tau$ and $\rho^* < \rho$, we have that
$u^*_{(\rho^* + 1)} = u_{0(\rho^* + 1)} - \tau > 0$, which implies that
$\tilde{u}_{(\rho^* + 1)} = \frac{a_{(\rho^*)} (u_{0(\rho^* + 1)} - \tau^*)}{a_{(\rho^*)} + a_{(\rho^* + 1)}} >
\frac{a_{(\rho^*)} (u_{0(\rho^* + 1)} - \tau)}{a_{(\rho^*)} + a_{(\rho^* + 1)}}
= \frac{a_{(\rho^*)} u^*_{(\rho^* + 1)}}{a_{(\rho^*)} + a_{(\rho^* + 1)}} > 0$, and
that $u^*_{(\rho^*)} = u_{0(\rho^*)} - \tau^* - \frac{a_{(\rho^* + 1)} (u_{0(\rho^* + 1)} - \tau^*)}{a_{(\rho^*)} + a_{(\rho^* + 1)}}
= u_{0(\rho^*)} - \frac{a_{(\rho^* + 1)} u_{0(\rho^* + 1)}}{a_{(\rho^*)} + a_{(\rho^* + 1)}}
- \left( 1 - \frac{a_{(\rho^* + 1)}}{a_{(\rho^*)} + a_{(\rho^* + 1)}}\right)\tau^*
>^{\text{(i)}} \left( 1 - \frac{a_{(\rho^* + 1)}}{a_{(\rho^*)} + a_{(\rho^* + 1)}}\right)( u_{0(\rho^* + 1)} - \tau)
= \left( 1 - \frac{a_{(\rho^* + 1)}}{a_{(\rho^*)} + a_{(\rho^* + 1)}}\right)( u^*_{(\rho^* + 1)}) > 0$,
where inequality (i) is justified by the facts that $u_{0(\rho^*)} \ge u_{0(\rho^* + 1)}$ and $\tau > \tau^*$.
This ensures that $\tilde{\vect{u}}$ is well defined.
We have:
\begin{eqnarray}\label{eq:lemma_final_squaredl1_3}
2(F(\vect{u}^*) - F(\tilde{\vect{u}})) &=&
\sum_{r=1}^p a_r (u^*_r - u_{0r})^2 - \sum_{r=1}^p a_r (\tilde{u}_r - u_{0r})^2 \nonumber\\
&=&
a_{(\rho^*)} (\tau^*)^2 + a_{(\rho^*+1)} u_{0(\rho^* + 1)}^2
- a_{(\rho^*)} \left( \tau^* + \frac{2 a_{(\rho^*+1)} \epsilon}{a_{(\rho^*)} + a_{(\rho^*+1)}}\right)^2 \nonumber\\ &&
- a_{(\rho^*+1)} \left( u_{0(\rho^* + 1)} - \frac{2 a_{(\rho^*)} \epsilon}{a_{(\rho^*)} + a_{(\rho^*+1)}}\right)^2 \nonumber\\
&=&
-\frac{4 a_{(\rho^*)} a_{(\rho^*+1)} \epsilon}{a_{(\rho^*)} + a_{(\rho^*+1)}} \underbrace{(\tau^* - u_{0(\rho^* + 1)})}_{-2\epsilon}
-\frac{4 a_{(\rho^*)} a_{(\rho^*+1)}^2 \epsilon^2}{\left(a_{(\rho^*)} + a_{(\rho^*+1)}\right)^2}
-\frac{4 a_{(\rho^*)}^2 a_{(\rho^*+1)} \epsilon^2}{\left(a_{(\rho^*)} + a_{(\rho^*+1)}\right)^2} \nonumber\\
&=&
\frac{4 a_{(\rho^*)} a_{(\rho^*+1)} \epsilon^2}{a_{(\rho^*)} + a_{(\rho^*+1)}} \ge 0,
\end{eqnarray}
which leads to a contradiction and completes the proof.
\end{proof}

\vskip 0.2in
\bibliographystyle{apalike}
\bibliography{OnlineMKLSP2010}

\end{document}